%File: formatting-instructions-latex-2024.tex
%release 2024.0
\documentclass[letterpaper]{article} % DO NOT CHANGE THIS
\usepackage{aaai24}  % DO NOT CHANGE THIS
\usepackage{times}  % DO NOT CHANGE THIS
\usepackage{helvet}  % DO NOT CHANGE THIS
\usepackage{courier}  % DO NOT CHANGE THIS
\usepackage[hyphens]{url}  % DO NOT CHANGE THIS
\usepackage{graphicx} % DO NOT CHANGE THIS
\urlstyle{rm} % DO NOT CHANGE THIS
  % DO NOT CHANGE THIS
\usepackage{natbib}  % DO NOT CHANGE THIS AND DO NOT ADD ANY OPTIONS TO IT
\usepackage{caption} % DO NOT CHANGE THIS AND DO NOT ADD ANY OPTIONS TO IT
\frenchspacing  % DO NOT CHANGE THIS
\setlength{\pdfpagewidth}{8.5in}  % DO NOT CHANGE THIS
\setlength{\pdfpageheight}{11in}  % DO NOT CHANGE THIS
%
% These are recommended to typeset algorithms but not required. See the subsubsection on algorithms. Remove them if you don't have algorithms in your paper.
\usepackage{algorithm}
\usepackage{algorithmic}

%
% These are are recommended to typeset listings but not required. See the subsubsection on listing. Remove this block if you don't have listings in your paper.
\usepackage{newfloat}
\usepackage{listings}
\DeclareCaptionStyle{ruled}{labelfont=normalfont,labelsep=colon,strut=off} % DO NOT CHANGE THIS
\lstset{%
	basicstyle={\footnotesize\ttfamily},% footnotesize acceptable for monospace
	numbers=left,numberstyle=\footnotesize,xleftmargin=2em,% show line numbers, remove this entire line if you don't want the numbers.
	aboveskip=0pt,belowskip=0pt,%
	showstringspaces=false,tabsize=2,breaklines=true}
\floatstyle{ruled}
\newfloat{listing}{tb}{lst}{}
\floatname{listing}{Listing}
%
% Keep the \pdfinfo as shown here. There's no need
% for you to add the /Title and /Author tags.
\pdfinfo{
/TemplateVersion (2024.1)
}

\usepackage{amsmath}
\usepackage{amssymb}
\usepackage{amsthm}
\DeclareMathOperator*{\argmax}{arg\,max}
\usepackage{graphicx}
\usepackage{mathtools}
\DeclarePairedDelimiter{\ceil}{\lceil}{\rceil}
\usepackage{xr}

\usepackage{siunitx}
\usepackage{adjustbox}
\sisetup{group-digits=false}

\usepackage[utf8]{inputenc}
\usepackage[T1]{fontenc}
\hyphenation{Fed-Avg}

\theoremstyle{plain}
\newtheorem{theorem}{Theorem}[section]

\newtheorem{lemma}[theorem]{Lemma}

\theoremstyle{definition}

\newtheorem{assumption}[theorem]{Assumption}
\theoremstyle{remark}

\usepackage{textcomp}
\usepackage{soul}
\usepackage{tablefootnote}

\setcounter{secnumdepth}{2} %May be changed to 1 or 2 if section numbers are desired.

% The file aaai24.sty is the style file for AAAI Press
% proceedings, working notes, and technical reports.
%

% Title

% Your title must be in mixed case, not sentence case.
% That means all verbs (including short verbs like be, is, using,and go),
% nouns, adverbs, adjectives should be capitalized, including both words in hyphenated terms, while
% articles, conjunctions, and prepositions are lower case unless they
% directly follow a colon or long dash
\iffalse
\title{A Primal-Dual Algorithm for Hybrid Federated Learning}
\author{
    %Authors
    % All authors must be in the same font size and format.
    Written by AAAI Press Staff\textsuperscript{\rm 1}\thanks{With help from the AAAI Publications Committee.}\\
    AAAI Style Contributions by Pater Patel Schneider,
    Sunil Issar,\\
    J. Scott Penberthy,
    George Ferguson,
    Hans Guesgen,
    Francisco Cruz\equalcontrib,
    Marc Pujol-Gonzalez\equalcontrib
}
\affiliations{
    %Afiliations
    \textsuperscript{\rm 1}Association for the Advancement of Artificial Intelligence\\
    % If you have multiple authors and multiple affiliations
    % use superscripts in text and roman font to identify them.
    % For example,

    % Sunil Issar\textsuperscript{\rm 2}, 
    % J. Scott Penberthy\textsuperscript{\rm 3}, 
    % George Ferguson\textsuperscript{\rm 4},
    % Hans Guesgen\textsuperscript{\rm 5}
    % Note that the comma should be placed after the superscript

    1900 Embarcadero Road, Suite 101\\
    Palo Alto, California 94303-3310 USA\\
    % email address must be in roman text type, not monospace or sans serif
    proceedings-questions@aaai.org
%
% See more examples next
}
\fi

%Example, Single Author, ->> remove \iffalse,\fi and place them surrounding AAAI title to use it
\iffalse
\title{My Publication Title --- Single Author}
\author {
    Author Name
}
\affiliations{
    Affiliation\\
    Affiliation Line 2\\
    name@example.com
}
\fi

%Example, Multiple Authors, ->> remove \iffalse,\fi and place them surrounding AAAI title to use it
\title{A Primal-Dual Algorithm for Hybrid Federated Learning}
\author {
    % Authors
    Tom Overman\textsuperscript{\rm 1},
    Garrett Blum\textsuperscript{\rm 2},
    Diego Klabjan\textsuperscript{\rm 3}
}
\affiliations {
    % Affiliations
    \textsuperscript{\rm 1}Department of Engineering Sciences and Applied Mathematics, Northwestern University\\
    \textsuperscript{\rm 2}Department of Mechanical Engineering, Northwestern University\\
    \textsuperscript{\rm 3}Department of Industrial Engineering and Management Sciences, Northwestern University\\
    tomoverman2025@u.northwestern.edu, garrettblum2024@u.northwestern.edu, d-klabjan@northwestern.edu
}

\begin{document}

\maketitle

\begin{abstract}
  Very few methods for hybrid federated learning, where clients only hold subsets of both features and samples, exist. Yet, this scenario is extremely important in practical settings. We provide a fast, robust algorithm for hybrid federated learning that hinges on Fenchel Duality. We prove the convergence of the algorithm to the same solution as if the model is trained centrally in a variety of practical regimes. Furthermore, we provide experimental results that demonstrate the performance improvements of the algorithm over a commonly used method in federated learning, FedAvg, and an existing hybrid FL algorithm, HyFEM. We also provide privacy considerations and necessary steps to protect client data.
\end{abstract}

\section{Introduction}
Federated learning (FL) has quickly become a top choice for privacy-aware machine learning \citep{fl}. The basic premise of federated learning is that a group of external nodes called clients hold parts of the data and a central server coordinates the training of a model representative of these data but without directly accessing the data itself. This requires the clients to train local models, then pass some information (such as model weights) to the server where the server aggregates the clients' contributions to update its global model. The goal of FL is to build algorithms that result in convergence to a similar objective value as the centralized case, as if the server had access to all data directly, and perform well over various problem settings with minimal communication overhead.

Federated learning can be classified based on how the data are gathered on the clients. In horizontal FL, each client holds a subset of the samples that contain all of their features. In vertical FL, each client holds all of the samples but only a subset of each sample's features. These are both special cases of hybrid FL where each client contains a subset of the samples and a subset of the features. 

Hybrid FL is less studied than the case of horizontal and vertical FL, but it is still extremely important in practice. An example of hybrid FL is the case where multiple hospitals wish to build a central model but cannot directly share data between hospitals due to privacy laws. Each hospital has a subset of all of the patients, and since each patient may have visited multiple hospitals, the patient's features are split between many hospitals. The same situation exists in banking for fraud detection with explainable convex models \cite{lv}.

Another example is in telecommunication where each tower collects data from cell devices that ping the tower. Each cell tower has different specifications and thus collects different measurements than other towers. Therefore, each tower collects different features, and since not every user connects to every tower and most users interact with multiple different towers, the samples are also split across towers. %To maintain user privacy, hybrid FL should be used to build a model that incorporates information from every cell tower.

We introduce a primal-dual algorithm, Hybrid Federated Dual Coordinate Ascent (HyFDCA), that solves convex problems in the hybrid FL setting. This algorithm extends CoCoA, a primal-dual distributed optimization algorithm introduced by \citet{cocoa1} and \citet{cocoa2}, to the case where both samples and features are partitioned across clients. We provide privacy considerations that ensure that client data cannot be reconstructed by the server. Next, we provide proofs of convergence under various problem settings including special cases where only subsets of clients are available for participation in each iteration. The algorithm and associated proofs can also be utilized in the distributed optimization setting where both samples and features are distributed. As far as we know, this is the only algorithm in the doubly distributed case that has guaranteed convergence outside of block-splitting ADMM developed by \cite{block_splitting_admm}. ADMM has not been designed with FL in mind, but the algorithm has no data sharing. On the down side, block-splitting ADMM requires full client participation which makes it much more restrictive than HyFDCA and essentially impractical for FL. HyFDCA is also the only known hybrid FL algorithm that converges to the same solution as if the model is trained centrally. Finally, we provide extensive experimental results that demonstrate the performance improvements of HyFDCA over FedAvg, a commonly-used FL algorithm \citep{fedavg}, and HyFEM, a hybrid FL algorithm \citep{hyfem}.

Our main contributions in this work are as follows:
\begin{enumerate}
    \item Provide HyFDCA, a provably convergent primal-dual algorithm for hybrid FL. The proofs cover a variety of FL problem settings such as incomplete client participation. Furthermore, the convergence rates provided for the special cases of horizontal and vertical FL match or exceed the rates of popular FL algorithms designed for those particular settings.
    \item Provide the privacy steps that ensure privacy of client data in the primal-dual setting. These principles apply to future efforts in developing primal-dual algorithms for FL.
    \item Demonstrate that HyFDCA empirically outperforms both FedAvg and HyFEM in the loss function value and validation accuracy across a multitude of problem settings and datasets. We also introduce a hyperparameter selection framework for FL with competing metrics using ideas from multiobjective optimization.
\end{enumerate}

In Section \ref{sec:related_work}, we discuss work that has been done in the vertical and horizontal settings and the lack of algorithms that exist for the hybrid setting. We then highlight the improvements that HyFDCA provides in theory and practice. In Section \ref{sec:algo}, we introduce HyFDCA and privacy considerations that protect client data. In Section \ref{sec:analysis}, we analyze convergence of HyFDCA and provide convergence results in a variety of practical FL problem settings. In Section \ref{sec:experiments}, we present experimental results on three separate data sets and compare the performance of HyFDCA with FedAvg and HyFEM.

\section{Related Work}
\label{sec:related_work}
There has been significant work in developing primal-dual algorithms using Fenchel Duality for distributed optimization where samples are distributed. One of the leading frameworks on this front is CoCoA. However, these algorithms do not properly handle data that is distributed over both samples and features. This extension from partitioning data over a single axis direction to both directions is not trivial, especially in the primal-dual case where now multiple clients share different copies of the same dual variables and primal weights. D3CA is the first algorithm to extend CoCoA to the case where data are distributed over samples and features \citep{d3ca}. However, D3CA has no convergence analysis and has convergence problems in practice with small regularization constant. HyFDCA fixes these issues with D3CA and is altered to ensure that the privacy requirements for FL are met. Block-splitting ADMM is the only other algorithm that can handle distributed samples and features. However, as \cite{d3ca} show, the empirical performance of block-splitting ADMM is poor and full client participation is needed. HyFDCA and the associated proofs, while focused on the federated setting, can also be utilized in the distributed optimization setting where both samples and features are distributed.

There has been substantial work in horizontal FL where samples are distributed across clients but each sample contains the full set of features. One of the most commonly used algorithms is FedAvg which, in essence, computes model weights on each client using stochastic gradient descent (SGD), then averages together these model weights in an iterative fashion. FedAvg can be naively extended to the hybrid FL case by computing client weights locally, as before, then concatenating the model weights and averaging at the overlaps. From now on, this modified version of FedAvg is what is meant when referencing FedAvg in the hybrid FL setting. Empirical results in Section \ref{sec:experiments} demonstrate that this naive extension is not satisfactory, and focused algorithms with satisfactory performance specifically for hybrid FL must be developed. The convergence rate of HyFDCA matches FedAvg \citep{fedavg_convergence} in the special case of horizontal FL. Furthermore, HyFDCA does not require smooth loss functions unlike FedAvg, making the convergence results more flexible.

FedDCD is an approach for using dual methods for FL, but is limited to the regime of horizontal FL \citep{fedDCD}. The extension to hybrid FL is not clear as now multiple clients hold copies of the same dual variables and the local coordinate descent that FedDCD performs is no longer valid. Furthermore, the proof results given for FedDCD require smooth loss functions which eliminate many common loss functions such as hinge loss. Our convergence results do not require smoothness of the loss functions. FedDCD does not mention privacy considerations in the case that the mapping between primal and dual variables can be inverted to reveal information about the data held on clients. We address these privacy concerns with suitable homomorphic encryption steps in HyFDCA. We note that to the best of our knowledge, HyFDCA is the only primal-dual algorithm that can handle vertical FL.

There has been substantially less work in vertical FL where each client has all of the samples but only a subset of the features. Some approaches exist such as FedSGD (vertical variant) and FedBCD which rely on communicating relevant information between clients to compute stochastic gradients despite only holding a portion of the features \citep{fedbcd}. However, these algorithms do not work in the hybrid FL case we are exploring. Furthermore, they require communication of this gradient information between clients instead of just passing information through the server. In addition, the convergence rate of HyFDCA in the special case of vertical FL is faster than FedBCD, demonstrating that while HyFDCA is designed to handle hybrid FL, it also enjoys improvements over existing methods in the special cases of horizontal and vertical FL.

To the best of our knowledge, there is only one other algorithm that focuses on hybrid FL, HyFEM. This algorithm uses a feature matching formulation that balances clients building accurate local models and the server learning an accurate global model. This requires a matching regularizer constant that must be tuned based on user goals and results in disparate local and global models. Furthermore, the convergence results provided for HyFEM only claim convergence of the matching formulation not of the original global problem and require complete client participation. In other words, though HyFEM can converge to a solution, it may be substantially worse or simply divergent with incomplete client participation than if the same data are used to train a model centrally as we show in Section \ref{results_section}. This work is substantially different than our approach which uses data on local clients to build a global model that converges to the same solution as if the model is trained centrally. Properly tuning the matching constant takes significant computational resources which is not a problem for HyFDCA. Furthermore, due to HyFEM not solving the original optimization problem, we show that the empirical results for convex problems are significantly worse than HyFDCA.

\section{The Primal-Dual Algorithm}
\label{sec:algo}
The goal is to solve the following minimization problem that consists of a strongly convex, L2 regularizer and a sum of convex loss functions
\begin{equation}
    \min_{w\in \mathbb{R}^M} P(w) = \frac{\lambda}{2} ||w||^2 + \frac{1}{N}\sum_{i=1}^N l_i(w^Tx_i)
\end{equation}
where $w$ are the weights of the model, $M$ is the total number of features, $N$ is the total number of samples, $\lambda$ is the regularization parameter that influences the relative importance of regularization, $x_i$ is the $i$-th sample, and $l_i$ are sample specific loss functions. This class of problems encompasses many important models in machine learning including logistic regression and support vector machines (SVM).

Our approach takes advantage of the Fenchel Dual of this problem, which is defined as

\begin{equation}
    \max_{\alpha \in \mathbb{R}^N} D(\alpha) = -\frac{\lambda}{2} || \frac{1}{\lambda N} \sum_{i=1}^N \alpha_i x_i||^2 - \frac{1}{N} \sum_{i=1}^N l_i^*(-\alpha_i)
\end{equation}
where $\alpha$ are the dual variables and $l^*$ is the convex conjugate of $l$. There is a convenient relationship between optimal primal, $w^*$, and dual variables, $\alpha^*$, defined by $w^*=\frac{1}{\lambda N}\sum_{i=1}^N\alpha^*_ix_i$. When $l_i$ are convex, we have that $P(w^*)=D(\alpha^*)$.

\subsection{The HyFDCA Algorithm}

The main idea of HyFDCA, shown in Algorithm \ref{mainalgo}, is that each client performs a local dual coordinate ascent to find updates to the dual variables. This local method utilizes the inner product of the primal weights and the data, and thus a secure way of finding this inner product across clients that contain sections of each sample is provided in Algorithm \ref{secureinnerproduct}. The details of the local dual method are shown in Algorithm \ref{localdualmethod}. These dual updates from clients are averaged together and then used to update the global dual variables held on the server. These updated dual variables are then sent back to the clients where they each compute their local contribution of the primal weights. These are then sent back to the server and aggregated. The steps to compute these global primal weights are shown in Algorithm \ref{primalaggregation}. A diagram of HyFDCA that demonstrates each step is shown in Figure \ref{flowchart_diagram}.

We introduce some additional notation. Set $\mathcal{B}_n$ is the set of clients that contain sample $n$; $\mathcal{I}_k$ is the set of samples available on client $k$; $\mathcal{M}_k$ is the set of features available to client $k$; and $\mathcal{K}_m$ is the set of clients that contain feature $m$. Furthermore, $x_{k,i}$ is the subset of sample $i$ available to client $k$, and $x_{k,i,m}$ is the value of feature $m$ of sample $i$ located on client $k$.

\begin{algorithm}[ht!]
 \caption{HyFDCA}
\label{mainalgo}
\begin{algorithmic}
% \KwResult{Write here the result }
\STATE Initialize $\alpha^0=0, w^0=0$, and $\hat{w}_{0}=0$.
\STATE Set $ip_{k,i}=0$ for every client $k$ and $i \in \mathcal{I}_k$.
 \FOR{t=1,2,...T}
    \STATE Given $\mathcal{K}^t$, the subset of clients available in the given iteration 
    \STATE Find $\overline{\mathcal{K}^t} = \{k: k \in \mathcal{K}^t \text{ and } k \notin \mathcal{K}^{t-1}\}$
    \STATE Send $\text{enc}(\alpha_{0}^t)$ to clients $k \in \overline{\mathcal{K}^t}$
  \STATE PrimalAggregation($\overline{\mathcal{K}^t}$)
  \STATE SecureInnerProduct($\mathcal{K}^t$)
  
  \FOR{\textbf{all clients} $k \in \mathcal{K}^t$}
    \STATE $\Delta \alpha_k^t$=LocalDualMethod($\alpha_k^{t-1},w_k^{t-1},x_i^Tw_0^{t-1}$)
    \STATE Send all $\text{enc}(\frac{\gamma_t}{|\mathcal{B}_n|}\Delta \alpha_{k}^t)$ to server
  \ENDFOR
  \FOR{n=1,2,...,N}
    \STATE $\text{enc}(\alpha_{0,n}^t) = \text{enc}(\alpha_{0,n}^{t-1}) + \sum_{b \in \mathcal{B}_n}\text{enc}(\frac{\gamma_t}{|\mathcal{B}_n|}\Delta\alpha_{b,n}^t)$
  \ENDFOR
  \STATE Send $\text{enc}(\alpha_{0}^t)$ to clients $k \in \mathcal{K}^t$ and clients decrypt
  \STATE PrimalAggregation($\mathcal{K}^t$)
  \STATE SecureInnerProduct($\mathcal{K}^t$)
 \ENDFOR
 \end{algorithmic}
\end{algorithm}

Due to the mapping between primal and dual variables, $w=\frac{1}{\lambda N}\sum_{i=1}^N\alpha_ix_i =\textbf{A}\alpha$, care needs to be taken to prevent the reconstruction of $\textbf{A}$ from iterates of $w$ and $\alpha$. The server could collect $w^t$ and $\alpha^t$ for many $t$ and construct a system of linear equations $\mathcal{W}=\textbf{A}\mathcal{A}$ where $\mathcal{W}$ collects iterates of $w$ in its columns and $\mathcal{A}$ collects iterates of $\alpha$ in its columns. This would allow for the solution of $\textbf{A}$ or the approximate solution using least-squares if $\textbf{A}$ is not square. For this reason, either $\alpha$ or $w$ should be encrypted to prevent this reconstruction of data. Because $w$ is used by the central model for inference on new data, we choose to encrypt $\alpha$ using homomorphic encryption. All aggregations are done on the server, and the server knows the sample IDs of each client through private set intersection \cite{9343209}.

Homomorphic encryption is a technique for encrypting data and preserving certain arithmetic operations in the encrypted form \citep{homomorphic_encryption}. For example, in additive homomorphic encryption the following holds: $\text{enc}(X) + \text{enc}(Y) = \text{enc}(X + Y)$. There are numerous algorithms for homomorphic encryption and new, faster algorithms are invented frequently. For example, the Paillier cryptosystem takes on average 18.882 ms for encryption, 18.865 ms for decryption, and 0.054 ms for addition in the encrypted state \citep{homo_enc_speed}. HyFDCA uses homomorphic encryption in several steps to ensure that the server can perform aggregation operations but not reconstruct the underlying data that belongs to the clients.

In addition, the communication of the inner product information poses a similar problem. If we define $b_i^t = (w^t)^Tx_i$, then the server could collect iterates of $b$ and $w$ and form a system of linear equations $B=\mathcal{W}x_i$ where $\mathcal{W}$ collects $w$ in its rows and $B$ is a column vector of the corresponding $b_i$. This system could then be solved for $x_i$. For this reason, the inner product components passed to the server from the clients must be encrypted using additive homomorphic encryption. So far, we have addressed the server reconstructing data, however, another concern is the clients themselves reconstructing data from other clients. It is important that the clients are only sent the dual variables corresponding to the samples on that client and only the primal weights corresponding to the features on that client. With this information they would only be able to reconstruct their local data.

The dual and primal variables are sent between the server and clients at the beginning and end of each iteration to ensure that clients are not using stale information because they may not participate in every iteration. Several round-trip communications (RTC) are necessary for the difficult case of hybrid FL with incomplete client participation. For the easier cases of horizontal and vertical FL explored in previous works, HyFDCA can be simplified without changing the framework, resulting in less communications. For horizontal FL with complete client participation, SecureInnerProduct and the first instance of PrimalAggregation can be removed which reduces each iteration to two RTC. Furthermore, if we assume complete client participation, then the first instances of SecureInnerProduct and PrimalAggregation can be removed and each iteration only has three RTC even in the hybrid FL setting.

\begin{algorithm}
\caption{SecureInnerProduct}
\label{secureinnerproduct}
% \KwResult{Write here the result }
\begin{algorithmic}
\STATE Input: Set of available clients $\mathcal{K}$ \\
  \FOR{\textbf{all clients} $k \in \mathcal{K}$}
  \FOR{\textbf{all samples} $i \in \mathcal{I}_k$}
  \STATE Client $k$ computes local $x_{k,i}^{T}w_k^t$ and encrypts this scalar using additive homomorphic encryption resulting in $\text{enc}(x_{k,i}^{T}w_k^t)$.
  \STATE $ip_{k,i}=\text{enc}(x_{k,i}^Tw_k^t)$.
  \STATE Send $ip_{k,i}$ to server.
  \ENDFOR
  \ENDFOR
  \FOR{\textbf{all samples} $i =1,2,...,N$}
    \STATE Server computes $\text{enc}(x_i^Tw^t)=\sum_{k\in \mathcal{B}_i}ip_{k,i}$. 
    \STATE Send to all clients $k \in \mathcal{K}$ all values $\text{enc}(x_i^Tw^t)$ for $i \in \mathcal{I}_k$.
  \ENDFOR
 \STATE Clients decrypt $\text{enc}(x_i^Tw_0^t)$ to obtain $x_i^Tw_0^t$.
 \end{algorithmic}
\end{algorithm}

\begin{algorithm}
 \caption{LocalDualMethod}
\label{localdualmethod}
\begin{algorithmic}
\STATE Input: $\alpha_{k}^{t-1}, w_k^{t-1}, x_i^Tw_0^t$
\STATE $D$ is a set of sample indices available to client $k$ of size $H$ randomly chosen without replacement
\STATE Let $\Delta \alpha_{k,i}^t=0$ for all $i\in\mathcal{I}_k$
 \FOR{$i \in D$}
 \STATE \text{Let $u_{i}^{t-1}\in\partial l_i(x_{i}^Tw_0^{t-1})$}
     \STATE $s_{k,i}=\argmax_{s\in [0,1]} \{ -l_i^*(-(\alpha_{k,i}^{t-1} + sc_k \gamma_t(u_{i}^{t-1}-\alpha_{k,i}^{t-1}))) -s c_k \gamma_t(w^{t-1})^Tx_i(u_{i}^{t-1}-\alpha_{k,i}^{t-1}) - \frac{\gamma_t^2}{2\lambda}(s c_k(u_{i}^{t-1}-\alpha_{k,i}^{t-1}))^2 \}$
     \STATE $\Delta \alpha_{k,i}^t=s_{k,i}c_k(u_{k,i}^{t-1}-\alpha_{k,i}^{t-1})$
 \ENDFOR
 \STATE Return $\Delta \alpha_k^t$.
\end{algorithmic}
\end{algorithm}

\begin{algorithm}
 \caption{PrimalAggregation}
\label{primalaggregation}
\begin{algorithmic}
\STATE Input: Set of available clients $\mathcal{K}$
    \FOR{\textbf{all clients} k $\in \mathcal{K}$}
    \FOR{\textbf{all features} $m \in \mathcal{M}_k$}
    \STATE $\hat{w}_{k,m} = \sum_{i \in I_k} \alpha^t_{k,i}  x_{k,i,m}$
    \ENDFOR
  \ENDFOR
  \STATE Update global $\hat{w}_{0,k,m}$ from available local $\hat{w}_{k,m}$
  \FOR{\textbf{all features} $m =1,2,...,M$}
    \STATE $w_{0,m}^{t} = \frac{1}{\lambda N}\sum_{k \in \mathcal{K}_m} \hat{w}_{0,k,m}$
  \ENDFOR
  \STATE Send $w_{0}^t$ to clients $k \in \overline{\mathcal{K}}$
  \end{algorithmic}
\end{algorithm}

\begin{figure}
\begin{center}
\includegraphics[width=.46\textwidth]{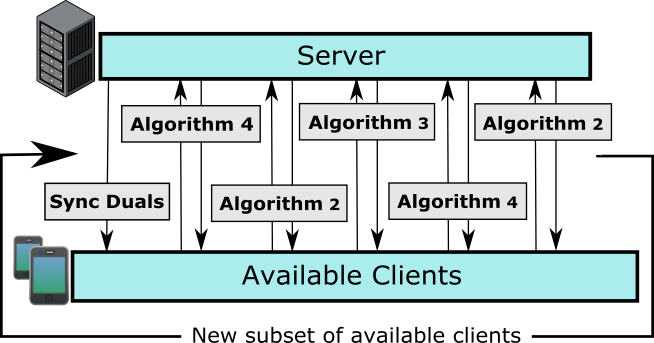}
\caption{Flowchart of HyFDCA. Each vertical arrow represents a communication of some information between clients and the server.}
\label{flowchart_diagram}
\end{center}
\end{figure}

\section{Convergence Analysis}
\label{sec:analysis}
We provide convergence proofs for HyFDCA in various problem settings. The proofs are in the Supplementary Materials of the extended paper \cite{overman2023primaldual}. We also note that no assumptions of IID data are made in these proofs, so they apply for non-IID settings.

\subsection{Hybrid Federated Setting with Complete Client Participation}
We first make the following assumptions of our problem setting.
\begin{assumption}
\label{assum1}
Loss functions $l_i\geq0$ are convex and $L$-Lipschitz functions. This is satisfied by many commonly-used loss functions in practice including logistic regression and hinge loss (support vector machines).
\end{assumption}
\begin{assumption}
The set of clients, $\mathcal{K}$, available at a given outer iteration is the full set of clients.
\end{assumption}
\begin{assumption}
\label{assum3}
The data are split among clients in the particular way shown in Figure \ref{clients_diagram}. The only assumption we make is that each sample on a particular client has the same features (the data are rectangular).
\begin{figure}[ht]
\begin{center}
\includegraphics[width=.38\textwidth]{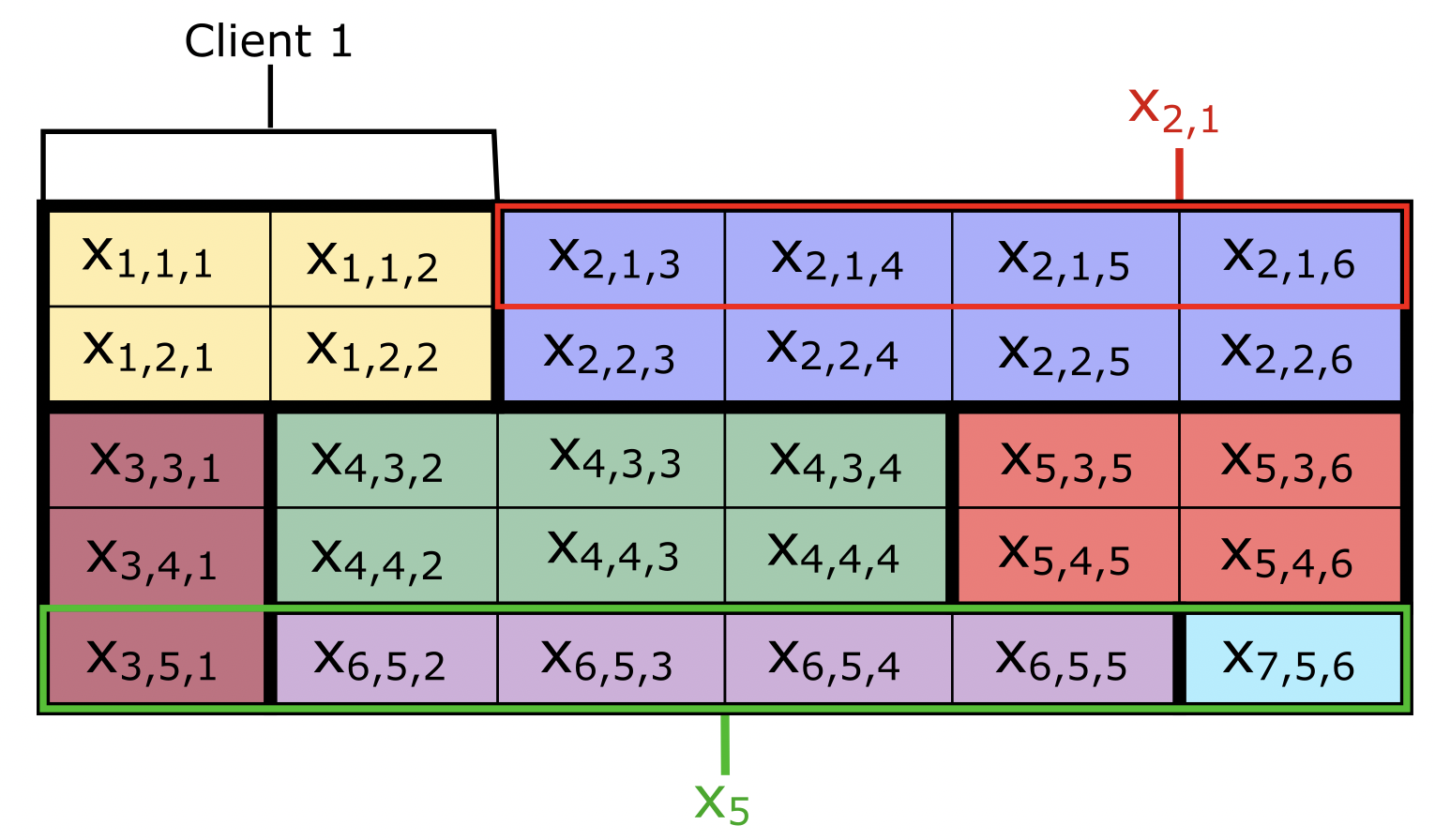}
\caption{Diagram of a possible data partitioning that follows Assumption \ref{assum3} for the convergence proof. Each client is shaded in a different color.}
\label{clients_diagram}
\end{center}
\end{figure}
\end{assumption}

\begin{theorem}
\label{thm1}
If Assumptions \ref{assum1}-\ref{assum3} are met, $\gamma_t=1$, and $c_k=N_k/N$ where $N_k$ is the number of samples on client $k$, then Algorithm \ref{mainalgo} results in the bound on the dual suboptimality gap, $\mathbb{E}[\varepsilon_D^t] \leq (1-\frac{s_t H}{N}) \mathbb{E}[\varepsilon_D^{t-1}] +\frac{s_t^2 H}{N}G$,
for any $s_t\in[0,1]$ and $G \leq \frac{2L^2}{\lambda}$, where $\varepsilon_D^t=D(\alpha^*)-D(\alpha^t)$.
\end{theorem}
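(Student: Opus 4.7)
My plan is to follow the stochastic dual coordinate ascent (SDCA) / CoCoA analysis template, adapted to the hybrid partition. The target quantity is the per-iteration dual progress $D(\alpha^t) - D(\alpha^{t-1})$, which I will lower-bound in expectation by a linear term in the current suboptimality $\varepsilon_D^{t-1}$ plus a quadratic residual that will ultimately collapse to $G$. First I would unfold the aggregation rule $\alpha_{0,n}^t = \alpha_{0,n}^{t-1} + \frac{1}{|\mathcal{B}_n|}\sum_{b\in\mathcal{B}_n}\Delta\alpha_{b,n}^t$ (using $\gamma_t=1$) to write the change in $D$ in terms of the per-client increments. Because SecureInnerProduct and PrimalAggregation ensure every client in $\mathcal{B}_i$ sees the same $x_i^T w_0^{t-1}$ and the same $\alpha_i^{t-1}$, the argmax in Algorithm \ref{localdualmethod} produces identical $s_{k,i}$ across $k\in\mathcal{B}_i$, so the $\frac{1}{|\mathcal{B}_n|}$ average collapses and the CoCoA-style separable quadratic upper bound on $-D$ can be invoked coordinate-wise.

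Next I would lower-bound the true dual progress by the quadratic surrogate inside the argmax. The surrogate, built from the $\frac{1}{\lambda}\|x_i\|^2$-smoothness of $\alpha\mapsto\tfrac{\lambda}{2}\|\tfrac{1}{\lambda N}\sum_i\alpha_i x_i\|^2$ together with the $c_k$ scaling, is a valid lower bound on $D(\alpha^t)-D(\alpha^{t-1})$ for every feasible $s$, not only the optimal one. I would then plug in a parametric choice $s\leftarrow s_t\in[0,1]$ and use $u_i^{t-1}\in\partial l_i(x_i^T w_0^{t-1})$ together with the Fenchel equality $-l_i^*(-u_i^{t-1}) = u_i^{t-1}\,x_i^T w_0^{t-1} - l_i(x_i^T w_0^{t-1})$. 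This turns the surrogate into a difference involving the primal--dual gap $P(w^{t-1})-D(\alpha^{t-1})$, which upper bounds $\varepsilon_D^{t-1}$ by weak duality, so one obtains the $-s_t \cdot (\text{something})\cdot \varepsilon_D^{t-1}$ contraction piece.

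The $H/N$ prefactor will come from taking expectation over the without-replacement batch $D$ of size $H$ chosen inside LocalDualMethod. Under Assumption \ref{assum3}, the rectangular partition together with $c_k=N_k/N$ and the $1/|\mathcal{B}_n|$ averaging is exactly tuned so that each coordinate $i$ is touched with effective rate $H/N$, yielding the $\frac{s_tH}{N}$ factor in front of both terms. The residual quadratic, $(s_t c_k(u_i^{t-1}-\alpha_i^{t-1}))^2/(2\lambda)$, is then bounded using $L$-Lipschitzness of $l_i$: both $|u_i^{t-1}|\le L$ (subgradient bound) and $|\alpha_i^{t-1}|\le L$ (support of $l_i^*$) give $(u_i^{t-1}-\alpha_i^{t-1})^2\le 4L^2$, and combining with the normalization $\|x_i\|^2\le 1$ and the $c_k$ weighting produces $G\le 2L^2/\lambda$. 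Assembling these pieces gives the stated one-step recursion.

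The step I expect to be hardest is the bookkeeping at the hybrid partition: demonstrating that the $c_k=N_k/N$ scaling, the $\frac{1}{|\mathcal{B}_n|}$ averaging, and the without-replacement sampling of $H$ indices on each client really do combine to give a coordinate-wise update that obeys the single-client CoCoA safe-step inequality. This requires leveraging Assumption \ref{assum3} to ensure that the ``block'' associated with each sample has consistent feature support across clients in $\mathcal{B}_i$, so that the quadratic surrogate separates cleanly across $i$. Once that separation is established the remainder of the argument is essentially the standard SDCA expected-improvement calculation, and the claimed recursion on $\mathbb{E}[\varepsilon_D^t]$ follows.
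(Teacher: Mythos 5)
Your plan is essentially the paper's own proof: bound $D(\alpha^t)-D(\alpha^{t-1})$ below by the separable quadratic surrogate (using $\|x_i\|\le 1$ and Cauchy--Schwarz to decouple the quadratic term across samples), exploit the fact that $s_{k,i}$ is an argmax so the bound holds for any $s_t\in[0,1]$, convert the resulting expression into the primal--dual gap via the Fenchel identity $-l_i^*(-u_i^{t-1})=u_i^{t-1}x_i^Tw^{t-1}-l_i(x_i^Tw^{t-1})$, invoke weak duality to pass to $\varepsilon_D^{t-1}$, and bound $G$ by $(u_i^{t-1}-\alpha_i^{t-1})^2\le 4L^2$ using Lipschitzness and the support of $l_i^*$. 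One detail in your write-up is imprecise: the claim that the argmax produces \emph{identical} $s_{k,i}$ across $k\in\mathcal{B}_i$, so that the $\tfrac{1}{|\mathcal{B}_n|}$ average collapses, does not hold in general, because $c_k=N_k/N$ enters the argmax and Assumption 3 allows clients sharing a sample to hold different numbers of samples; moreover the per-client selection probability is $H/N_b$, which also varies with $b$. The paper instead keeps the client average and distributes it via convexity of $l_i^*$ (and of the square), after which the $1/N_b$ from the sampling probability cancels against $c_b=N_b/N$ to yield the uniform $H/N$ rate; your ``separable quadratic upper bound'' machinery delivers exactly this once you drop the collapse shortcut, so the gap is cosmetic rather than structural.
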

Next, we find the bound on the suboptimality gap in terms of the number of iterations.
\begin{theorem}
\label{thm2_act}
If $\frac{H}{N}\leq1$, then for every $t\geq t_0$ we have
\begin{align}
 \mathbb{E}[\varepsilon_D^t] \leq \frac{2G}{1+\frac{H}{2N}(t-t_0)}
    \label{thm2}
\end{align}
where $t_0=\max\{ 0, \lceil \log(\mathbb{E}[\varepsilon_D^0]/G)\rceil \}$. This upper bound clearly tends to zero as $t\rightarrow \infty$.
\end{theorem}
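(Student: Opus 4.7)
The plan is to apply the one-step recurrence from Theorem~\ref{thm1}, $\mathbb{E}[\varepsilon_D^t] \leq (1 - \frac{s_t H}{N}) \mathbb{E}[\varepsilon_D^{t-1}] + \frac{s_t^2 H}{N} G$ with free choice of $s_t \in [0,1]$, in two phases: a geometric ``burn-in'' that uses $s_t = 1$ to bring the dual gap down to a small multiple of $G$, followed by an induction on $t$ with a carefully decaying $s_t$ that reproduces the $1/t$ rate. This is the standard template used for SDCA/CoCoA style analyses and the $(1-\theta_t)$ contraction plus $\theta_t^2 G$ error structure already exposed by Theorem~\ref{thm1} is exactly what it needs.

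For the burn-in I would set $s_t = 1$, so $\mathbb{E}[\varepsilon_D^t] - G \leq (1 - H/N)(\mathbb{E}[\varepsilon_D^{t-1}] - G)$; iterating gives $\mathbb{E}[\varepsilon_D^t] \leq G + (1 - H/N)^t (\mathbb{E}[\varepsilon_D^0] - G)_{+}$. Using $(1-H/N)^t \leq e^{-Ht/N}$ together with the definition $t_0 = \max\{0,\lceil \log(\mathbb{E}[\varepsilon_D^0]/G)\rceil\}$ then yields the base case $\mathbb{E}[\varepsilon_D^{t_0}] \leq 2G$, which coincides with the claimed bound at $t = t_0$ (the denominator there is $1$). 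The $\max\{0,\cdot\}$ covers the trivial case $\mathbb{E}[\varepsilon_D^0] \leq G$, in which the burn-in is vacuous.

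The induction step is where the decaying choice does its work. Assuming the bound holds at $t-1$, write $A_{t-1} = 1 + \frac{H}{2N}(t-1-t_0)$ and pick $s_t = 1/A_{t-1}$, which lies in $(0,1]$ for $t \geq t_0+1$. Plugging this $s_t$ and the inductive hypothesis $\mathbb{E}[\varepsilon_D^{t-1}] \leq 2G/A_{t-1}$ into Theorem~\ref{thm1} gives $\mathbb{E}[\varepsilon_D^t] \leq \frac{2G}{A_{t-1}} - \frac{GH}{N A_{t-1}^2}$, and the goal is to bound this by $2G/A_t$ with $A_t = A_{t-1} + H/(2N)$. After cross-multiplying, the inequality collapses to $H A_{t-1} \leq H A_{t-1} + H^2/(2N)$, which is automatic, closing the induction.

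I expect the main obstacle to be justifying this precise choice $s_t = 1/A_{t-1}$: too large a value forces $s_t$ out of $[0,1]$, while too small a value leaves the quadratic penalty $(s_t^2 H / N)G$ too big to match the $\Theta(1/t^2)$ telescoping gap between $2G/A_{t-1}$ and $2G/A_t$, so the induction would not close. A minor secondary nuisance is the ceiling in the definition of $t_0$, which only shifts the base case by at most one iteration and is absorbed by the constant $2$ in the numerator.
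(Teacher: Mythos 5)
Your proposal is correct and follows essentially the same route as the paper's proof: a burn-in phase with $s_t=1$ yielding the base case $\mathbb{E}[\varepsilon_D^{t_0}]\leq 2G$ via the geometric contraction and the definition of $t_0$, followed by induction with $s_t = 1/\bigl(1+\tfrac{H}{2N}(t-1-t_0)\bigr)$, where the paper's bound on the factor $U$ is just your cross-multiplied inequality in disguise. The only cosmetic difference is that you telescope the burn-in via $\mathbb{E}[\varepsilon_D^t]-G$ rather than summing the geometric series explicitly.
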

The requirement of $H \leq N$ places an upper limit on the number of inner iterations that can be performed before aggregation across clients.

\subsection{Horizontal Federated Setting with Random Subsets of Available Clients}
The case of incomplete client participation for hybrid FL is difficult because of the presence of stale variables due to clients not participating in some iterations affecting several steps of the algorithm. More details on where these stale variables impact the algorithm are given in Section \ref{vertical_proof_sec}. The PrimalAggregation and SecureInnerProduct steps before and after the local updates alleviate these issues in practice, but problems still exist for convergence proofs. For this reason, we first approach the incomplete client participation case for horizontal FL where this problem does not exist.
\begin{assumption}
\label{assum_horiz2}
Data are split among clients such that every client has the full set of features but only a subset of samples (definition of horizontal FL). Furthermore, we assume that each client holds $N/K$ samples, where $K$ is the total number of clients.
\end{assumption}
\begin{assumption}
\label{assum_horiz3}
Each outer iteration, a random subset of clients is chosen to participate and perform updates. Each client has an equal probability of being chosen and the mean number of clients chosen for a given outer iteration is $P$. Thus the mean fraction of clients participating in a given outer iteration is $\frac{P}{K}$.
\end{assumption}

\begin{theorem}
\label{thm_horiz1}
% If Assumptions \ref{assum1},\ref{assum_horiz2}, and \ref{assum_horiz3} are met, $\gamma_t=1$, $c_k=1$, and $\frac{PH}{N}\leq1$, then Algorithm \ref{mainalgo} results in the following bound on the dual suboptimality gap for every $t\geq t_0$
If Assumptions \ref{assum1},\ref{assum_horiz2}, and \ref{assum_horiz3} are met, $\gamma_t=1$, $c_k=1$, and $\frac{PH}{N}\leq1$, then Algorithm \ref{mainalgo} results in the dual suboptimality gap for every $t\geq t_0$ as follows
\begin{align}
 \mathbb{E}[\varepsilon_D^t] \leq \frac{2G}{1+\frac{PH}{2N}(t-t_0)}
    \label{thm_horiz2}
\end{align}
where $t_0=\max\{ 0, \lceil \log(\mathbb{E}[\varepsilon_D^0]/G)\rceil \}$ and $G \leq \frac{2L^2}{\lambda}$. This upper bound clearly tends to zero as $t\rightarrow \infty$.
\end{theorem}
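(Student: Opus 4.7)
The plan is to mimic the argument used for Theorems \ref{thm1} and \ref{thm2_act}, adapting it to incorporate the two independent sources of randomness in the horizontal setting: the random subset of clients chosen at each outer iteration, and the random subset of $H$ samples selected within each participating client. Since Assumption \ref{assum_horiz2} guarantees disjoint samples across clients (horizontal FL), each dual coordinate $\alpha_i$ is ``owned'' by exactly one client, which explains why $c_k=1$ is safe here (no double-counting across overlapping clients as in the hybrid case).

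First, I would establish a one-step recursion for $\mathbb{E}[\varepsilon_D^t]$ analogous to Theorem \ref{thm1}. Conditional on the random subset $\mathcal{K}^t$ of participating clients, I would expand $D(\alpha^t) - D(\alpha^{t-1})$ using the closed-form maximization inside \textsc{LocalDualMethod}, apply convexity of $l_i^*$ (Assumption \ref{assum1}), and use the standard CoCoA-style surrogate bound to lower-bound the local improvement on each participating client by a term proportional to $s_t \varepsilon_D^{t-1} - s_t^2 G$, with $G \le 2L^2/\lambda$ coming from Lipschitzness. Then I would take expectations, using that each client is selected with mean frequency $P/K$ (Assumption \ref{assum_horiz3}) and that the $H$ samples within a selected client are chosen uniformly without replacement from the $N/K$ samples available there. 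By independence of these two levels of sampling and disjointness of samples across clients, the probability that any given coordinate $\alpha_i$ receives an update in a single outer iteration is
\begin{equation*}
\frac{P}{K}\cdot\frac{H}{N/K} = \frac{PH}{N}.
\end{equation*}
This yields the contraction
\begin{equation*}
\mathbb{E}[\varepsilon_D^t] \le \Bigl(1 - \frac{s_t PH}{N}\Bigr)\mathbb{E}[\varepsilon_D^{t-1}] + \frac{s_t^2 PH}{N} G,
\end{equation*}
which is the horizontal-incomplete analogue of the bound in Theorem \ref{thm1}, with $H$ replaced by the effective sampling rate $PH$. The assumption $PH/N \le 1$ is exactly what is needed so that the coefficient $1 - s_t PH/N$ remains in $[0,1]$ for all admissible $s_t \in [0,1]$.

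The second step is to unroll this recursion exactly as in Theorem \ref{thm2_act}. I would choose $s_t = 1$ for $t < t_0 := \max\{0,\lceil \log(\mathbb{E}[\varepsilon_D^0]/G)\rceil\}$ to drive the initial error down to order $G$, and then switch to $s_t = 2N/(2N + PH(t - t_0))$ for $t \ge t_0$. A straightforward induction on $t$ (the same one that gives the $2G/(1 + (H/2N)(t - t_0))$ rate in Theorem \ref{thm2_act}) then delivers
\begin{equation*}
\mathbb{E}[\varepsilon_D^t] \le \frac{2G}{1 + \frac{PH}{2N}(t - t_0)},
\end{equation*}
which is the stated bound.

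The main obstacle I anticipate is the conditioning argument for the expectation: I need to argue cleanly that randomization over participating clients and randomization over the $H$ local coordinates combine multiplicatively into the factor $PH/N$ without introducing cross terms that would degrade the bound. Horizontality is crucial here, because it makes each dual coordinate the responsibility of a single client, so the event ``coordinate $i$ is updated'' factors as ``client $k(i)$ is chosen'' $\cap$ ``$i$ is among the $H$ samples chosen on client $k(i)$''. In the hybrid case these events are coupled across clients through $\mathcal{B}_n$, which is precisely why the authors separate this theorem from the hybrid incomplete-participation analysis and why the stale variables discussed around the \textsc{PrimalAggregation} step do not contaminate the recursion here.
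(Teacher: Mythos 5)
Your proposal is correct and follows essentially the same route as the paper's proof: reduce to the one-step recursion of Theorem \ref{thm1} (simplified because $|\mathcal{B}_i|=1$ in the horizontal case), replace the update probability $H/N$ by $PH/N$ using the two independent levels of sampling, and then unroll exactly as in Theorem \ref{thm2_act} with $H$ replaced by $PH$. Your explicit factorization $\frac{P}{K}\cdot\frac{H}{N/K}=\frac{PH}{N}$ is just a more detailed justification of the step the paper states directly.
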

The requirement on $PH\leq N$ similarly places a limit on the amount of inner iterations that can be performed before aggregation. This result demonstrates that HyFDCA enjoys a convergence rate of $\mathcal{O}(\frac{1}{t})$ which matches the convergence rate of FedAvg. However, our convergence proof does not assume smooth loss functions whereas FedAvg does. This makes our convergence results more flexible in the horizontal FL setting.

\subsection{Vertical Federated Setting with Incomplete Client Participation}
\label{vertical_proof_sec}
We now explore the case of incomplete client participation for the vertical federated setting. We change the assumptions for how subsets of clients are available for participation in Assumption \ref{assum_vert3} because random client subsets impose some issues for vertical FL. If a particular $\alpha_i$ is updated, then $w_0$ needs to be updated using local data on each client. If one of these clients cannot provide its contribution to $w_0$, then these primal weights will be stale and thus $(w^{t-1}_0)^Tx_i$ used in LocalDualMethod will also be stale. We require a limit on the maximum number of iterations that a particular client can go without being updated. The reason this cannot be extended to the hybrid case is that if $w_0^{t-1}$ is updated, then a particular client will require $(w_0^{t-1})^Tx_i$ for any $i$, but now that samples are also split across clients, it may not be able to access all components of $(w_0^{t-1})^Tx_i$ if some of those clients are not available. This is unlike the vertical FL case where all samples belong to each client.
\begin{assumption}
\label{assum_vert2}
Data are split among clients such that every client has the full set of samples but only a subset of features (definition of vertical FL).
\end{assumption}
\begin{assumption}
\label{assum_vert3}
All of the clients are partitioned into $C\geq2$ sets where each subset of clients has $Q/C$ clients (we assume that $Q\mod C=0$). Let $\mathcal{B}_1, \mathcal{B}_2,...,\mathcal{B}_C$ be this partition. We then assume that client subsets are active (participating in a particular outer iteration) in the cyclic fashion. Thus the sequence of active clients is defined as $\mathcal{B}_1, \mathcal{B}_2,...,\mathcal{B}_C, \mathcal{B}_1,...,\mathcal{B}_C$,...
\end{assumption}

\begin{theorem}
\label{thm_vert1}
If Assumptions \ref{assum1}, \ref{assum_vert2}, and \ref{assum_vert3} are met, $\frac{H}{N}\leq 1$, $c_k=1$, and $\gamma_t=\frac{1}{t}$, then Algorithm 1 results in the following bound on the dual suboptimality gap for $t\geq C$
\begin{align*}
    \mathbb{E}[\varepsilon_D^{t}] \leq \frac{J_1 + J_2(\ln(t-C+1)+1)}{t^{H/N}}
\end{align*}
where $J_1 = C^{H/N}\mathbb{E}[\varepsilon_D^{C-1}]$, $J_2 = \frac{2HL^2(C+1)}{N\lambda}[(C-1)^4  +2(C-1)^2+1]$,
and $\mathbb{E}[\varepsilon_D^{C-1}]$ is bounded by a constant with the standard assumption that $l_i(0)\leq1$. This converges to zero as $t\rightarrow \infty$.
\end{theorem}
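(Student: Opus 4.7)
The plan is to adapt the recursive argument of Theorem \ref{thm1} to the cyclic-participation setting, absorbing the staleness of primal weights into an enlarged noise term, and then to solve the resulting time-varying recursion explicitly. First, I would rederive the one-step descent lemma underlying Theorem \ref{thm1}, but restricted to the dual coordinates ``seen'' by the currently active block $\mathcal{B}_{(t \bmod C)+1}$. The structural point is that in vertical FL every client holds every sample, so even when only a single block is active, LocalDualMethod still performs a well-defined ascent step; the only deviation from the ideal CoCoA update is that the inner product $(w^{t-1})^{\top} x_i$ is evaluated using a stale primal estimate $\tilde{w}^{t-1}$ in which each per-feature contribution was last refreshed at some iteration in $[t-C,\,t-1]$. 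Carrying this perturbation through the proof of Theorem \ref{thm1} gives a per-iteration inequality of the form
\[
\mathbb{E}[\varepsilon_D^t] \;\leq\; \Bigl(1-\tfrac{\gamma_t H}{N}\Bigr)\mathbb{E}[\varepsilon_D^{t-1}] \;+\; \tfrac{\gamma_t^2 H}{N}\bigl(G+S_t\bigr),
\]
where $G\leq 2L^2/\lambda$ is inherited from Theorem \ref{thm1} and $S_t$ captures the staleness error.

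Second, I would bound $S_t$ explicitly. Writing $w^{t-1}-\tilde{w}^{t-1}$ as a sum of block-wise contributions that are each at most $C-1$ iterations out of date, and using the primal--dual map $w=\frac{1}{\lambda N}\sum_i \alpha_i x_i$ together with the bound $\|\Delta\alpha\|\leq 2L$ implied by $L$-Lipschitzness, yields $\|w^{t-1}-\tilde{w}^{t-1}\| \leq (L/\lambda)\sum_{s=1}^{C-1}\gamma_{t-s}$ multiplied by a factor linear in $C$ that counts active-block contributions over a full cycle. Squaring, expanding the resulting double sum, and applying Cauchy--Schwarz together with the monotonicity $\gamma_{t-s}\leq \gamma_{t-C+1}$ produces exactly $((C-1)^2+1)^2 = (C-1)^4+2(C-1)^2+1$; an extra factor $(C+1)$ arises from summing inner-product perturbations across the $C$ blocks, and the $2HL^2/(N\lambda)$ prefactor matches the one in Theorem \ref{thm1}. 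This reproduces the constant $J_2$ as stated.

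Third, I would solve the recursion by unrolling from $t=C-1$ with $\gamma_t=1/t$, giving
\[
\mathbb{E}[\varepsilon_D^t] \;\leq\; \prod_{s=C}^{t}\!\Bigl(1-\tfrac{H}{Ns}\Bigr)\mathbb{E}[\varepsilon_D^{C-1}] \;+\; \sum_{s=C}^{t}\prod_{k=s+1}^{t}\!\Bigl(1-\tfrac{H}{Nk}\Bigr)\cdot \tfrac{H(G+S_s)}{N s^2}.
\]
Using the standard estimate $\prod_{k=s+1}^{t}(1-H/(Nk))\leq (s/t)^{H/N}$, the initial-error term collapses to $C^{H/N}\mathbb{E}[\varepsilon_D^{C-1}]/t^{H/N} = J_1/t^{H/N}$, and the noise term becomes $t^{-H/N}\sum_{s=C}^{t} s^{H/N-2}$; since $H/N\leq 1$, this sum is $\leq \ln(t-C+1)+1$, producing the $J_2(\ln(t-C+1)+1)/t^{H/N}$ piece. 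The bound on $\mathbb{E}[\varepsilon_D^{C-1}]$ follows from $l_i(0)\leq 1$ and the definition of $D(\alpha)$, giving a uniform constant that absorbs into $J_1$.

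The main obstacle, as the quartic-in-$C$ shape of $J_2$ suggests, will be the staleness bound in Step Two: a naive uniform estimate on the $C-1$ stale contributions is far too loose, and one must carefully track the cyclic schedule and exploit $\gamma_{t-s}\leq \gamma_{t-C+1}$ while expanding the squared sum to recover exactly the polynomial coefficient. A secondary subtlety is justifying that the contraction rate in the one-step lemma is $\gamma_t H/N$ rather than $\gamma_t H/(NC)$; this relies crucially on the vertical-FL assumption that every active client sees every sample, so a block's update acts on the full dual vector in a single iteration rather than only on a $1/C$ fraction of coordinates.
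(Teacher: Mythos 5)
Your proposal follows essentially the same route as the paper's proof: a one-step descent inequality whose noise term is enlarged by a staleness correction expressed through the primal--dual map $\textbf{A}\alpha^{t-1}=w^{t-1}+D$, bounded via $|\Delta\alpha|\le 2L$ and $\gamma_{t-d}\le\gamma_{t-C+1}$, then unrolled using $\prod_k(1-H/(Nk))\le (s/t)^{H/N}$ and a harmonic-number bound. The only cosmetic differences are in constant bookkeeping --- in the paper the factor $(C+1)$ arises from bounding $\sum_i (i+C)/i^2$ in the unrolled sum rather than from summing perturbations across blocks, and the polynomial $(C-1)^4+2(C-1)^2+1$ appears as three separate contributions (the $\|D\|^2$ term in the primal--dual gap, the stale inner-product term, and the baseline $2L^2/\lambda$) rather than as a single squared sum --- but these do not change the structure or validity of the argument.
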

It is clear that for fastest convergence in an asymptotic sense we want $H/N$ to be large, however, this would also increase the magnitude of $J_1$ and $J_2$ which would in turn slow convergence in early iterations when $t$ is small and $J_1$ and $J_2$ dominate the bound. Furthermore, if we take $H/N=1$, then HyFDCA exhibits $\mathcal{O}(\frac{\log{t}}{t})$ convergence whereas FedBCD exhibits a slower $\mathcal{O}(\frac{1}{\sqrt{t}})$ convergence rate and requires full client participation. Thus, to the best of our knowledge HyFDCA exhibits the best convergence rates for vertical FL even with partial client participation.

We emphasize that Theorems \ref{thm2_act}-\ref{thm_vert1} demonstrate that HyFDCA in a particular federated setting converges to the same optimal solution as if all of the data are collected on a centralized device and trained with a convergent method. Furthermore, Theorems \ref{thm_horiz1} and \ref{thm_vert1} demonstrate that in the horizontal and vertical FL cases, HyFDCA is still guaranteed to converge to the optimal solution when only a subset of clients participate in each iteration. The convergence rates for the special cases of horizontal and vertical FL match or exceed the convergence rates of existing FL algorithms in those settings.

\section{Experimental Results}
\label{sec:experiments}
We investigate the performance of HyFDCA on several datasets and in several different problem settings (number of clients and percentage of available clients). These different problem settings cover the vast number of different environments seen in practice.

Three datasets are selected. MNIST is a database of handwritten digits where each sample is a $28$x$28$ pixel image \citep{MNIST}. News20 binary is a class-balanced two-class variant of the UCI ``20 newsgroup'' dataset, a text classification dataset \citep{news20_covtype}. Finally, Covtype binary is a binarized dataset for predicting forest cover type from cartographic variables \citep{news20_covtype}. 

We use the hinge loss function for $l_i$ in experiments. A practical variant of LocalDualMethod, shown in Algorithm \ref{practicalLocalDualMethod}, is used for experiments. Line 6 of Algorithm \ref{practicalLocalDualMethod} has a closed form solution of $\Delta \alpha_{k,i}^{t} = y_i(\max(0,\min(1,\lambda N(1-x_i^Tw_0^{t-1})+y_i\alpha_{k,i}^{t-1}))) - \alpha_{k,i}^{t-1}$, where $y_i$ is the class label for the $i$-th sample. Furthermore, the second occurrence of SecureInnerProduct in Algorithm \ref{mainalgo} is omitted for experiments because it does not improve empirical performance and incurred more communication cost.
\begin{algorithm}
\caption{LocalDualMethod (Practical Variant)}
\label{practicalLocalDualMethod}
\begin{algorithmic}
\STATE Input: $\alpha_{k}^{t-1}, w_k^{t-1}, x_i^Tw_0^t$
\STATE $D$ is a set of sample indices available to client $k$ of size $H$ randomly chosen without replacement
\STATE Let $\Delta \alpha_{k,i}^t=0$ for all $i\in\mathcal{I}_k$
 \FOR{$i \in D$}
    %  Find $\Delta \alpha_{k,i}^t$ by solving $\frac{dl_i^*(-(\alpha_{k,i}^{t-1} + \Delta \alpha_{k,i}^t))}{d\Delta \alpha_{k,i}^t}+\frac{\Delta \alpha_{k,i}^t}{\lambda N}+(w_0^{t-1})^Tx_i=0$\\
     \STATE Find $\Delta \alpha_{k,i}^t$ that maximizes $-l_i^*(-(\alpha_{k,i}^{t-1}+\Delta \alpha_{k,i}^t)) - \frac{\lambda N}{2}(||w_0^{t-1}||^2 +\frac{2 \Delta \alpha_{k,i}^t}{\lambda N}(w_0^{t-1})^Tx_i + (\frac{\Delta \alpha_{k,i}^t}{\lambda N})^2)$
     \STATE $\alpha_{k,i}^{t-1}=\alpha_{k,i}^{t-1} + \Delta \alpha_{k,i}^t$ 
     
 \ENDFOR
 \STATE Return $\Delta \alpha_k^t$.
 \end{algorithmic}
\end{algorithm}

\subsection{Implementation}

The exact details of the implementation are provided in the Supplementary Materials. Data are inherently non-IID in the hybrid FL case because each client stores different sections of the feature space. We emphasize that the experiments are performed in the hybrid setting where both samples and features are gathered across different clients. Homomorphic encryption is not actually performed; instead, published time benchmarks of homomorphic encryption is used to estimate the encryption time penalty which is added to the overall wall time. The regularization parameter, $\lambda$, is found by tuning via a centralized model where the value of $\lambda$ that resulted in the highest validation accuracy is employed. The resulting choices of $\lambda$ are $\lambda_{MNIST}=0.001$, $\lambda_{News20}=1\times 10^{-5}$, and $\lambda_{covtype}=5\times 10^{-5}$.

Hyperparameter tuning for federated learning is difficult because there are many competing interests such as minimizing iterations to reach a suitable solution while also minimizing the amount of computation performed on clients due to computational limits on common clients such as smartphones. Therefore, we frame this as a multiobjective optimization problem where an optimal solution must be selected from the Pareto-Optimal front. We chose to use Gray Relational Analysis to solve this \citep{GRA}. The exact metrics used are provided in the Supplementary Materials. For FedAvg, we tuned the number of local iterations of SGD performed as well as $a,b$ in the learning rate $\gamma_t=\frac{a}{b+\sqrt{t}}$. For HyFEM, we tune the aforemetnioned FedAvg hyperparameters in addition to $\mu$ which balances the two losses. For HyFDCA, we only need to tune the number of inner iterations. In each problem setting, the number of clients and fraction of available clients had different hyperparemeters tuned for that particular problem.

The plots shown use the relative loss function which is defined as $P_R = \frac{P(w^t)-P_C^*}{P_C^*}$ where $P_C^*$ is the optimal loss function value trained centrally. 
% Furthermore, Figure \ref{figure1} includes $0.2575$s latency time penalties for each round-trip communication (rtc), and it includes encryption times. The times are scaled to the time of the slowest instance, but the relative times between the two algorithms are preserved.
The x-axis of Figure \ref{figure3} shows relative outer iteration because each dataset requires a different number of outer iterations, but the number of iterations for each algorithm is kept consistent. Further plotting details are provided in the Supplementary Materials.

\begin{figure*}[ht!]
\begin{center}
\includegraphics[width=.88\textwidth]{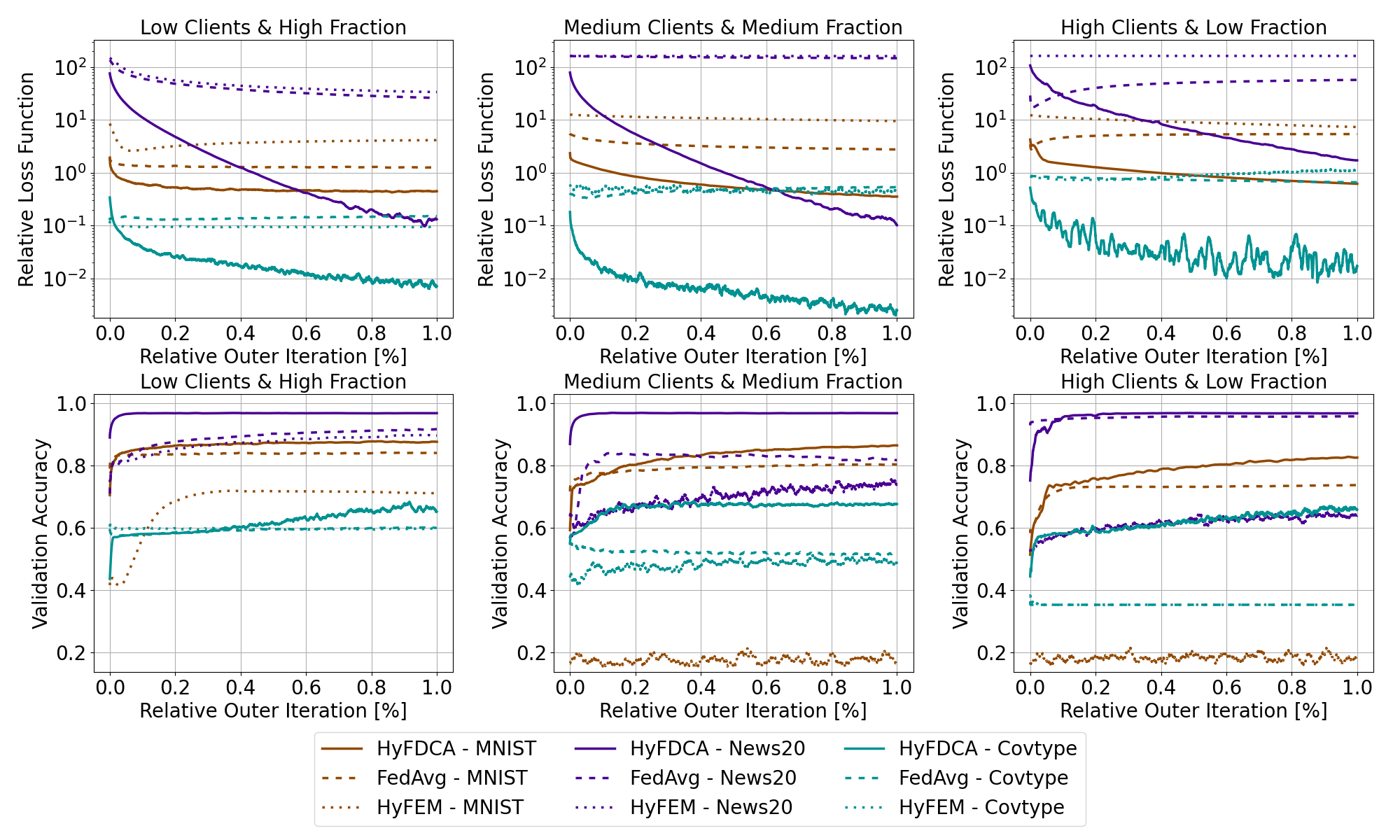}
\caption{Comparison of HyFDCA, FedAvg, \& HyFEM over constant number of outer iterations across client-fraction settings.}
\label{figure3}
\end{center}
\end{figure*}

% \begin{figure*}[h!]
% \begin{center}
% \includegraphics[width=.9\textwidth]{Figure2_News20_cropped.png}
% \caption{Effect of number of clients and fraction of participating clients on HyFDCA performance on News20.}
% \label{figure2-news20}
% \end{center}
% \end{figure*}

\begin{figure*}[ht!]
\begin{center}
\includegraphics[width=.85\textwidth]{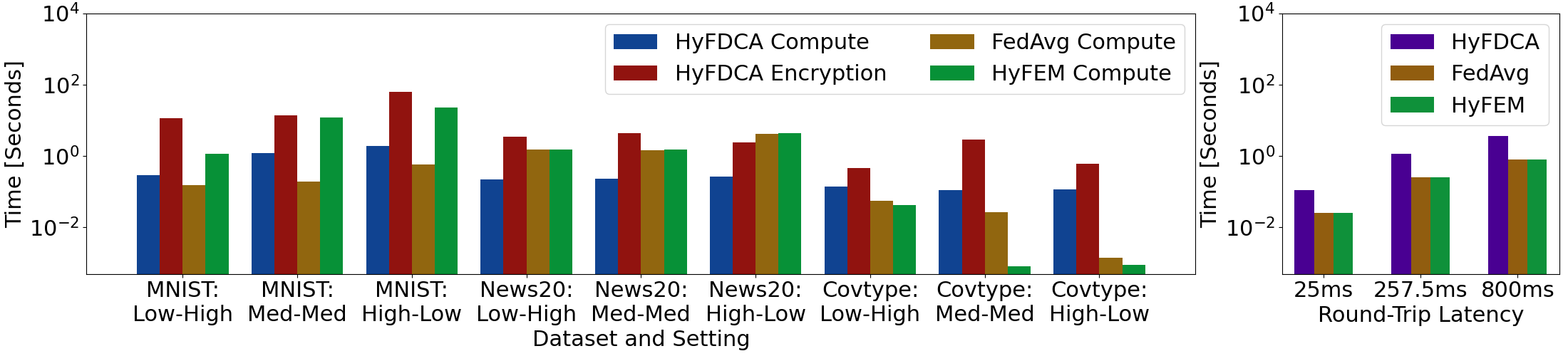}
\caption{Average time costs of components of each outer iteration for HyFDCA, FedAvg, and HyFEM.}
\label{figure4}
\end{center}
\end{figure*}

\subsection{Results}
\label{results_section}
% \tiny{FOR REFERNCE THIS IS THE MINIMMUM SIZE FONT 7pt} \small{and this is slightly bigger at 9t}
We now discuss the results of the experiments. Due to the large number of problem settings we investigate and the various metrics, only selected plots are displayed in the main body. Complete results are included in the Supplementary Materials.

Figure \ref{figure3} compares the performance of HyFDCA with FedAvg and HyFEM over a variety of settings with respect to outer iterations. Similarly, Supplementary Materials Figure 1 compares performance with respect to time including communication latency and homomorphic encryption time. These plots correspond to varying levels of difficulty. Intuitively, a large number of clients with a low fraction of participating clients is more difficult than a small number of clients with a high fraction of participating clients. Figure \ref{figure3} and Supplementary Materials Figure 1 show that HyFDCA converges to a lower relative loss function value and a higher validation accuracy in 69 of 72 comparisons made. The poor performance of FedAvg demonstrates that algorithms designed specifically for horizontal or vertical FL cannot simply be lifted to the hybrid case. HyFEM's similarly poor performance demonstrates that its main utility is in non-convex problems with significant overlap in feature spaces across clients and where the matching of nonlinear embeddings can be utilized. Moreover, these results are indicative that, though HyFEM can converge, it may result in a substantially worse solution than if trained centrally. Finally, though HyFDCA is a significantly more complex algorithm, HyFDCA often achieves better loss and generalization in a shorter amount of both outer iterations and time even accounting for encryption and latency.

% \st{Figure} \ref{figure2-news20} \st{shows the effect of different problem settings on the performance of HyFDCA. Although each problem setting used different tuned hyperparameters, it is clear that the settings with the small fraction of participating clients converged significantly slower. News20 is chosen as the representative plot, but the behavior for MNIST rand Covtype is similar as shown in the appendix.}

Figure \ref{figure4} shows the average time cost breakdown per outer iteration of the three algorithms. HyFDCA takes more time per outer iterations than FedAvg or HyFEM. However, the most expensive component of HyFDCA is the homomorphic encryption cost. This is expected to significantly decrease in the future as homomorphic encryption algorithms become much faster due to heavy research efforts. In addition, various methods can be employed to decrease the homomorphic encryption costs such as parallelizing the encryption/decryption of the vectors or choosing whether to encrypt the primal or the dual variables depending on the dataset.

 While HyFDCA is a more complicated algorithm involving more RTC and homomorphic encryption, the clear empirical performance gains over FedAvg and HyFEM make it superior in 69 of 72 comparisons examined. Specifically, HyFDCA converges to a lower loss value and higher validation accuracy in less overall time in 33 of 36 comparisons examined and 36 of 36 comparisons examined with respect to the number of outer iterations. Lastly, HyFDCA only requires tuning of one hyperparameter, the number of inner iterations, as opposed to FedAvg (which requires tuning three) or HyFEM (which requires tuning four). In addition to FedAvg and HyFEM being quite difficult to optimize hyperparameters in turn greatly affecting convergence, HyFDCA's single hyperparameter  allows for simpler practical implementations and hyperparameter selection methodologies.
 
 %\textbf{Societal Impact}: We are not aware of any potential negative societal impacts of this work.
 
\nocite{*}

\bibliography{aaai24}

\begin{thebibliography}{26}
\providecommand{\natexlab}[1]{#1}

\bibitem[{Adorjan(2020)}]{AWS-latency}
Adorjan, M. 2020.
\newblock {AWS} Inter-Region Latency Monitoring.
\newblock \url{https://github.com/mda590/cloudping.co.git}.

\bibitem[{Bergstra and Bengio(2012)}]{random-search}
Bergstra, J.; and Bengio, Y. 2012.
\newblock Random Search for Hyper-Parameter Optimization.
\newblock \emph{Journal of Machine Learning Research}, 13: 281–305.

\bibitem[{Chang and Lin(2011)}]{news20_covtype}
Chang, C.-C.; and Lin, C.-J. 2011.
\newblock {LIBSVM}: A Library for Support Vector Machines.
\newblock \emph{ACM Transactions on Intelligent Systems and Technology}, 2(3).

\bibitem[{Deng(2012)}]{MNIST}
Deng, L. 2012.
\newblock The mnist database of handwritten digit images for machine learning research.
\newblock \emph{IEEE Signal Processing Magazine}, 29(6): 141--142.

\bibitem[{Fan, Fang, and Friedlander(2022)}]{fedDCD}
Fan, Z.; Fang, H.; and Friedlander, M.~P. 2022.
\newblock A dual approach for federated learning.
\newblock \emph{arXiv:2201.11183}.

\bibitem[{Gentry(2009)}]{homomorphic_encryption}
Gentry, C. 2009.
\newblock Fully Homomorphic Encryption Using Ideal Lattices.
\newblock In \emph{Proceedings of the Forty-First Annual ACM Symposium on Theory of Computing}, STOC '09, 169–178.
\newblock ISBN 9781605585062.

\bibitem[{Harris et~al.(2020)}]{numpy}
Harris, C.~R.; et~al. 2020.
\newblock Array programming with NumPy.
\newblock \emph{Nature}, 585(7825): 357--362.

\bibitem[{Jaggi et~al.(2014)Jaggi, Smith, Tak\'{a}\v{c}, Terhorst, Krishnan, Hofmann, and Jordan}]{cocoa1}
Jaggi, M.; Smith, V.; Tak\'{a}\v{c}, M.; Terhorst, J.; Krishnan, S.; Hofmann, T.; and Jordan, M.~I. 2014.
\newblock Communication-Efficient Distributed Dual Coordinate Ascent.
\newblock In \emph{Proceedings of the 27th International Conference on Neural Information Processing Systems}, volume~2, 3068–3076.

\bibitem[{Li et~al.(2020{\natexlab{a}})Li, Sahu, Talwalkar, and Smith}]{fl}
Li, T.; Sahu, A.~K.; Talwalkar, A.; and Smith, V. 2020{\natexlab{a}}.
\newblock Federated Learning: Challenges, Methods, and Future Directions.
\newblock \emph{IEEE Signal Processing Magazine}, 37(3): 50--60.

\bibitem[{Li et~al.(2020{\natexlab{b}})Li, Huang, Yang, Wang, and Zhang}]{fedavg_convergence}
Li, X.; Huang, K.; Yang, W.; Wang, S.; and Zhang, Z. 2020{\natexlab{b}}.
\newblock On the Convergence of FedAvg on Non-IID Data.
\newblock In \emph{8th International Conference on Learning Representations}.

\bibitem[{Liu et~al.(2022)Liu, Zhang, Kang, Li, Chen, Hong, and Yang}]{fedbcd}
Liu, Y.; Zhang, X.; Kang, Y.; Li, L.; Chen, T.; Hong, M.; and Yang, Q. 2022.
\newblock FedBCD: A Communication-Efficient Collaborative Learning Framework for Distributed Features.
\newblock \emph{IEEE Transactions on Signal Processing}, 1--12.

\bibitem[{Lu and Ding(2020)}]{9343209}
Lu, L.; and Ding, N. 2020.
\newblock Multi-party Private Set Intersection in Vertical Federated Learning.
\newblock In \emph{2020 IEEE 19th International Conference on Trust, Security and Privacy in Computing and Communications (TrustCom)}, 707--714.

\bibitem[{Lv et~al.(2021)Lv, Cheng, Zhang, Ye, Meng, and Wang}]{lv}
Lv, B.; Cheng, P.; Zhang, C.; Ye, H.; Meng, X.; and Wang, X. 2021.
\newblock Research on Modeling of E-banking Fraud Account Identification Based on Federated Learning.
\newblock In \emph{2021 IEEE Intl Conf on Dependable, Autonomic and Secure Computing, Intl Conf on Pervasive Intelligence and Computing, Intl Conf on Cloud and Big Data Computing, Intl Conf on Cyber Science and Technology Congress (DASC/PiCom/CBDCom/CyberSciTech)}, 611--618.

\bibitem[{McMahan et~al.(2017)McMahan, Moore, Ramage, Hampson, and y~Arcas}]{fedavg}
McMahan, H.~B.; Moore, E.; Ramage, D.; Hampson, S.; and y~Arcas, B.~A. 2017.
\newblock Communication-Efficient Learning of Deep Networks from Decentralized Data.
\newblock In \emph{AISTATS}, volume~54, 1273--1282.

\bibitem[{Nathan and Klabjan(2017)}]{d3ca}
Nathan, A.; and Klabjan, D. 2017.
\newblock Optimization for Large-Scale Machine Learning with Distributed Features and Observations.
\newblock In \emph{Machine Learning and Data Mining in Pattern Recognition}, 132--146.
\newblock ISBN 978-3-319-62416-7.

\bibitem[{Overman, Blum, and Klabjan(2023)}]{overman2023primaldual}
Overman, T.; Blum, G.; and Klabjan, D. 2023.
\newblock A Primal-Dual Algorithm for Hybrid Federated Learning.
\newblock arXiv:https://arxiv.org/abs/2210.08106.

\bibitem[{Parikh and Boyd(2014)}]{block_splitting_admm}
Parikh, N.; and Boyd, S. 2014.
\newblock Block splitting for distributed optimization.
\newblock \emph{Mathematical Programming Computation}, 6(1): 77--102.

\bibitem[{Shalev-Shwartz and Zhang(2013)}]{shalev}
Shalev-Shwartz, S.; and Zhang, T. 2013.
\newblock Stochastic Dual Coordinate Ascent Methods for Regularized Loss.
\newblock \emph{Journal of Machine Learning Research}, 14(1): 567–599.

\bibitem[{Sidorov, Wei, and Ng(2022)}]{homo_enc_speed}
Sidorov, V.; Wei, E. Y.~F.; and Ng, W.~K. 2022.
\newblock Comprehensive Performance Analysis of Homomorphic Cryptosystems for Practical Data Processing.
\newblock \emph{arXiv:2202.02960}.

\bibitem[{Smith et~al.(2017)Smith, Forte, Ma, Tak\'{a}\v{c}, Jordan, and Jaggi}]{cocoa2}
Smith, V.; Forte, S.; Ma, C.; Tak\'{a}\v{c}, M.; Jordan, M.~I.; and Jaggi, M. 2017.
\newblock CoCoA: A General Framework for Communication-Efficient Distributed Optimization.
\newblock \emph{Journal of Machine Learning Research}, 18(1): 8590–8638.

\bibitem[{Telesat(2017)}]{GEO-latency}
Telesat. 2017.
\newblock Real-Time Latency: Rethinking Remote Networks.
\newblock \url{https://www.telesat.com/wp-content/uploads/2020/07/Real-Time-Latency\_HW.pdf}.

\bibitem[{T‑Mobile~USA(2022)}]{5G-latency}
T‑Mobile~USA, I. 2022.
\newblock Policies: Open Internet.
\newblock \url{https://www.t-mobile.com/responsibility/consumer-info/policies/internet-service?INTNAV=fNav:OpenInternet}.

\bibitem[{Virtanen et~al.(2020)}]{scipy}
Virtanen, P.; et~al. 2020.
\newblock SciPy 1.0: fundamental algorithms for scientific computing in {P}ython.
\newblock \emph{Nature Methods}, 17(3): 261--272.

\bibitem[{Wang and Rangaiah(2017)}]{GRA}
Wang, Z.; and Rangaiah, G.~P. 2017.
\newblock Application and Analysis of Methods for Selecting an Optimal Solution from the Pareto-Optimal Front obtained by Multiobjective Optimization.
\newblock \emph{Industrial \& Engineering Chemistry Research}, 56(2): 560--574.

\bibitem[{Yang(2013)}]{tianbao}
Yang, T. 2013.
\newblock Trading Computation for Communication: Distributed Stochastic Dual Coordinate Ascent.
\newblock In \emph{Advances in Neural Information Processing Systems}, volume~26.

\bibitem[{Zhang et~al.(2020)Zhang, Yin, Hong, and Chen}]{hyfem}
Zhang, X.; Yin, W.; Hong, M.; and Chen, T. 2020.
\newblock Hybrid Federated Learning: Algorithms and Implementation.
\newblock In \emph{NeurIPS-SpicyFL 2020}.

\end{thebibliography}

\onecolumn
\section{Convergence Proofs}
\begin{lemma}
\label{1}
Let $l_i:\mathbb{R} \rightarrow \mathbb{R}$ be an $L$-Lipschitz continuous function. Then for any real value $\alpha$ with $|\alpha|>L$ we have that $l_i^*(\alpha)=\infty$
\end{lemma}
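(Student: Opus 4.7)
The plan is to work directly from the definition of the convex conjugate, $l_i^*(\alpha) = \sup_{u \in \mathbb{R}} \{\alpha u - l_i(u)\}$, and exhibit a sequence of feasible $u$ along which the objective diverges to $+\infty$ whenever $|\alpha| > L$. The $L$-Lipschitz hypothesis gives the one-sided linear bound $l_i(u) \leq l_i(0) + L|u|$ for every $u \in \mathbb{R}$, and this is the only structural fact about $l_i$ that I will need.

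First, I would split the analysis into the two cases $\alpha > L$ and $\alpha < -L$. In the first case, for $u > 0$ the Lipschitz bound gives
\begin{equation*}
\alpha u - l_i(u) \;\geq\; \alpha u - l_i(0) - L u \;=\; (\alpha - L) u - l_i(0),
\end{equation*}
and since $\alpha - L > 0$, letting $u \to +\infty$ sends the right-hand side to $+\infty$, hence $l_i^*(\alpha) = +\infty$. In the second case, for $u < 0$ the same Lipschitz bound yields
\begin{equation*}
\alpha u - l_i(u) \;\geq\; \alpha u - l_i(0) + L u \;=\; (\alpha + L) u - l_i(0),
\end{equation*}
and since $\alpha + L < 0$, letting $u \to -\infty$ again drives the right-hand side to $+\infty$. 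Taking the supremum in each case gives $l_i^*(\alpha) = \infty$, as claimed.

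There is no real obstacle here: the argument is a one-line consequence of the Lipschitz bound combined with the definition of the Fenchel conjugate. The only thing to be slightly careful about is handling the sign of $u$ correctly when unpacking $|u|$, which is why the case split on the sign of $\alpha$ is useful. No convexity of $l_i$ is needed for this particular statement, only Lipschitz continuity.
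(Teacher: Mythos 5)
Your proof is correct. The paper itself does not prove this lemma; it simply defers to Lemma~21 of the cited reference (Shalev-Shwartz and Zhang), and the argument given there is essentially the one you wrote: the Lipschitz property yields $l_i(u) \leq l_i(0) + L|u|$, so for $|\alpha| > L$ the conjugate's objective $\alpha u - l_i(u)$ grows at least linearly along $u \to +\infty$ (when $\alpha > L$) or $u \to -\infty$ (when $\alpha < -L$), forcing the supremum to be $+\infty$. Your case split on the sign of $\alpha$ handles $|u|$ correctly, and you are right that only Lipschitz continuity, not convexity, is needed; the only added value of your write-up over the paper's is that it is self-contained.
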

\begin{proof}
Proof provided in Lemma 21 in \citep{shalev}.
\end{proof}

\subsection{Proof of Theorem 1}
\begin{proof}
For simplicity, the global dual and primal variables are denoted as: $w=w_0$ and $\alpha_i=\alpha_{0,i}$. We also frequently use the following relationships.
\begin{enumerate}
    \item The map from dual to primal is $w = \frac{1}{\lambda N}\sum_{i=1}^N\alpha_i x_i = \textbf{A}\alpha$.
    \item $\alpha_i^t= \alpha_i^{t-1}+\gamma_t\Delta\alpha_i^t$
    \item Equalities $\alpha_i^{t-1}=\alpha_{b,i}^{t-1}=\frac{1}{|\mathcal{B}_i|}\sum_{b\in\mathcal{B}_i}\alpha_{b,i}^{t-1}$ hold because the aggregated dual variables on the server are sent back to the clients every iteration.
    \item Following from Cauchy-Schwarz, we have $||\sum_{i=1}^N z_i||^2 \leq N\sum_{i=1}^N ||z_i||^2$.
\end{enumerate}

 Starting with the difference in dual objective after one outer iteration,  we have
\begin{align*}
    N[D(\alpha^t) - D(\alpha^{t-1})] = \underbrace{[-\sum_{i=1}^Nl_i^*(-\alpha_i^t) - \frac{\lambda N}{2}||\textbf{A}\alpha^t||^2]}_{A} - \underbrace{[-\sum_{i=1}^Nl_i^*(-\alpha_i^{t-1}) - \frac{\lambda N}{2}||\textbf{A}\alpha^{t-1}||^2]}_{B}.
\end{align*}
We rewrite $A$ as
\begin{align*}
    A = \underbrace{-\sum_{i=1}^N l_i^*(-\alpha_i^t)}_{A_1} \underbrace{- \frac{\lambda N}{2}||\textbf{A}\alpha^t||^2}_{A_2}
\end{align*}

and then bound $A_1$ as
\begin{align*}
    A_1 &=-\sum_{i=1}^N l_i^*(-\alpha_i^{t-1}-\gamma_t \Delta \alpha_i^t)\\
    &= -\sum_{i=1}^N l_i^*(-(1-\gamma_t)\alpha_i^{t-1} - \gamma_t(\alpha_i^{t-1}+\Delta \alpha_i^{t}))\\
    &\geq -\sum_{i=1}^N[\gamma_t l_i^*(-\alpha_i^{t-1}-\Delta \alpha_i^t)+(1-\gamma_t)l_i^*(-\alpha_i^{t-1})]
\end{align*}
%this is to make the formatting better
\clearpage

and rewrite $A_2$ as
\begin{align*}
    A_2 &= -\frac{\lambda N}{2}||\textbf{A}\alpha^t||^2\\
    &= -\frac{\lambda N}{2}||\textbf{A}[\alpha^{t-1}+\gamma_t \Delta \alpha^t] ||^2\\
    &=-\frac{\lambda N}{2}||w^{t-1} + \gamma_t \textbf{A}\Delta \alpha^t||^2\\
    &=-\frac{\lambda N}{2}[||w^{t-1}||^2 + 2\gamma_t(w^{t-1})^T\textbf{A}\Delta \alpha^t + \gamma_t^2||\textbf{A}\Delta \alpha^t||^2].
\end{align*}

Our bound on $A$ is
\begin{align*}
    A \geq -\sum_{i=1}^N [\gamma_t l_i^*(-\alpha_i^{t-1}-\Delta \alpha_i^{t}) + (1-\gamma_t)l_i^*(-\alpha_i^{t-1})] - \frac{\lambda N}{2}[||w^{t-1}||^2+2\gamma_t(w^{t-1})^T\textbf{A}\Delta \alpha^t + \gamma_t^2||\textbf{A}\Delta \alpha^t||^2].
\end{align*}

Expression $B$ appears in the RHS, so we can simplify
\begin{align*}
    A-B \geq -\sum_{i=1}^N [\gamma_t l_i^*(-\alpha_i^{t-1}-\Delta \alpha_i^{t})  -\gamma_tl_i^*(-\alpha_i^{t-1})] - \frac{\lambda N}{2}[2\gamma_t(w^{t-1})^T\textbf{A}\Delta \alpha^t + \gamma_t^2||\textbf{A}\Delta \alpha^t||^2].
\end{align*}

Next, we re-write the linear operator $\textbf{A}$ as a summation over samples
\begin{align*}
    A-B &\geq -\sum_{i=1}^N [\gamma_t l_i^*(-\alpha_i^{t-1}-\Delta \alpha_i^{t})  -\gamma_tl_i^*(-\alpha_i^{t-1})] - \frac{\lambda N}{2}[2\gamma_t(w^{t-1})^T\sum_{i=1}^N\frac{1}{\lambda N}x_i\Delta \alpha_i^t + \gamma_t^2||\sum_{i=1}^N\frac{1}{\lambda N}x_i\Delta \alpha_i^t||^2]\\
    &= -\sum_{i=1}^N [\gamma_t l_i^*(-\alpha_i^{t-1}-\Delta \alpha_i^{t})  -\gamma_tl_i^*(-\alpha_i^{t-1}) +\gamma_t(w^{t-1})^Tx_i\Delta \alpha_i^t] - \frac{\lambda N}{2}[\gamma_t^2||\sum_{i=1}^N\frac{1}{\lambda N}x_i\Delta \alpha_i^t||^2]\\
    &\geq -\sum_{i=1}^N [\gamma_t l_i^*(-\alpha_i^{t-1}-\Delta \alpha_i^{t})  -\gamma_tl_i^*(-\alpha_i^{t-1}) +\gamma_t(w^{t-1})^Tx_i\Delta \alpha_i^t] - \frac{\lambda N}{2}[\frac{\gamma_t^2}{\lambda^2 N}\sum_{i=1}^N||x_i\Delta \alpha_i^t||^2].\\
\end{align*}

Without loss of generality, we assume our data is normalized such that $||x_i||\leq1$, which yields
\begin{align*}
    A-B&\geq -\sum_{i=1}^N [\gamma_t l_i^*(-\alpha_i^{t-1}-\Delta \alpha_i^{t})  -\gamma_tl_i^*(-\alpha_i^{t-1}) +\gamma_t(w^{t-1})^Tx_i\Delta \alpha_i^t + \frac{\gamma_t^2}{2\lambda}(\Delta \alpha_i^t)^2 ].\\
\end{align*}

Our dual updates are defined as $\Delta \alpha_i^t=\frac{1}{|\mathcal{B}_i|}\sum_{b \in \mathcal{B}_i}\Delta \alpha_{b,i}^t$, which is just the mean of the dual variable updates on each client that contains sample $i$. In turn we have

\begin{align*}
    A-B&\geq -\sum_{i=1}^N [\gamma_t l_i^*(-\alpha_i^{t-1}-\frac{1}{|\mathcal{B}_i|}\sum_{b \in \mathcal{B}_i}\Delta \alpha_{b,i}^t)  -\gamma_tl_i^*(-\alpha_i^{t-1}) +\gamma_t(w^{t-1})^Tx_i(\frac{1}{|\mathcal{B}_i|}\sum_{b \in \mathcal{B}_i}\Delta \alpha_{b,i}^t) + \frac{\gamma_t^2}{2\lambda}(\frac{1}{|\mathcal{B}_i|}\sum_{b \in \mathcal{B}_i}\Delta \alpha_{b,i}^t)^2 ]\\
    &= -\sum_{i=1}^N [\gamma_t l_i^*(-\frac{1}{|\mathcal{B}_i|}\sum_{b \in \mathcal{B}_i}(\alpha_i^{t-1} + \Delta \alpha_{b,i}^t))  -\gamma_tl_i^*(-\alpha_i^{t-1}) +\gamma_t(w^{t-1})^Tx_i(\frac{1}{|\mathcal{B}_i|}\sum_{b \in \mathcal{B}_i}\Delta \alpha_{b,i}^t) + \frac{\gamma_t^2}{2\lambda}(\frac{1}{|\mathcal{B}_i|}\sum_{b \in \mathcal{B}_i}\Delta \alpha_{b,i}^t)^2 ].
\end{align*}
By convexity of $l_i^*$ we get
\begin{align*}
    A-B&\geq -\sum_{i=1}^N [\frac{\gamma_t}{|\mathcal{B}_i|}\sum_{b \in \mathcal{B}_i} l_i^*(-(\alpha_i^{t-1} + \Delta \alpha_{b,i}^t))  -\gamma_tl_i^*(-\alpha_i^{t-1}) +\frac{\gamma_t}{|\mathcal{B}_i|}\sum_{b \in \mathcal{B}_i}(w^{t-1})^Tx_i\Delta \alpha_{b,i}^t + \frac{\gamma_t^2}{2\lambda |\mathcal{B}_i|}\sum_{b \in \mathcal{B}_i}(\Delta \alpha_{b,i}^t)^2 ]\\
    &=-\gamma_t\sum_{i=1}^N \frac{1}{|\mathcal{B}_i|}  \sum_{b \in \mathcal{B}_i} [l_i^*(-(\alpha_i^{t-1} + \Delta \alpha_{b,i}^t))  - l_i^*(-\alpha_i^{t-1}) +(w^{t-1})^Tx_i\Delta \alpha_{b,i}^t + \frac{\gamma_t}{2\lambda}(\Delta \alpha_{b,i}^t)^2 ].
\end{align*}

The stochastic element of this algorithm is in randomly determining which $\Delta \alpha_{b,i}$ are nonzero. If a particular $\Delta \alpha_{b,i}$ is zero, then that term in the double series in the right hand side of the inequality is zero. If that $\Delta \alpha_{b,i}$ is nonzero, then the local dual method chooses the $\Delta \alpha_{b,i}$ as follows. We choose our local dual method such that $\Delta \alpha_{b,i}^t=s_{b,i} c_b (u_{i}^{t-1}-\alpha_{b,i}^{t-1})$ where $s_{b,i} \in [0,1]$ and $u_{i}^{t-1}\in\partial l_i(x_{i}^Tw_0^{t-1})$. 
We define our local dual method to find $s_{b,i}$ as follows
\begin{align*}
    s_{b,i}=\argmax_{s\in [0,1]} \{ -l_i^*(-(\alpha_{b,i}^{t-1} + s c_b(u_{i}^{t-1}-\alpha_{b,i}^{t-1}))) -s c_b(w^{t-1})^Tx_i(u_{i}^{t-1}-\alpha_{b,i}^{t-1}) - \frac{\gamma_t}{2\lambda}(s c_b(u_{i}^{t-1}-\alpha_{b,i}^{t-1}))^2 \}.
\end{align*}

Now we must find the probability that a particular $\alpha_{b,i}$ is updated. This is $H/N_b$ where $N_b$ is the number of samples on client $b$. So we take the expectation over these client/sample pairs while conditioning on the previous state, $\alpha_i^{t-1}$, to obtain

\begin{align*}
    \mathbb{E}[A-B|\alpha^{t-1}] &\geq -\gamma_t H\sum_{i=1}^N \frac{1}{|\mathcal{B}_i|} \sum_{b \in \mathcal{B}_i} \frac{1}{N_b}[l_i^*(-(\alpha_i^{t-1} + s_{b,i}c_b (u_{i}^{t-1}-\alpha_{b,i}^{t-1})))  - l_i^*(-\alpha_i^{t-1})\\ &+(w^{t-1})^Tx_is_{b,i} c_b(u_{i}^{t-1}-\alpha_{b,i}^{t-1}) + \frac{\gamma_t}{2\lambda}(s_{b,i}c_b(u_{i}^{t-1}-\alpha_{b,i}^{t-1}))^2 ].
\end{align*}

Due to the definition of the LocalDualMethod, the inequality holds for any choice of $s_t\in[0,1]$, and thus
\begin{align*}
    \mathbb{E}[A-B|\alpha^{t-1}] &\geq -\gamma_t H\sum_{i=1}^N \frac{1}{|\mathcal{B}_i|}  \sum_{b \in \mathcal{B}_i} \frac{1}{N_b} [l_i^*(-(\alpha_i^{t-1} + s_{t}c_b(u_{i}^{t-1}-\alpha_{b,i}^{t-1})))  - l_i^*(-\alpha_i^{t-1})\\ &+(w^{t-1})^Tx_is_{t}c_b(u_{i}^{t-1}-\alpha_{b,i}^{t-1}) + \frac{\gamma_t}{2\lambda}(s_{t}c_b(u_{i}^{t-1}-\alpha_{b,i}^{t-1}))^2 ].
\end{align*}

By convexity of $l_i^*$ and because $c_b=\frac{N_b}{N}\leq 1$, we conclude
\begin{align*}
    \mathbb{E}[A-B|\alpha^{t-1}] &\geq -\gamma_t H\sum_{i=1}^N \frac{1}{|\mathcal{B}_i|}  \sum_{b \in \mathcal{B}_i} \frac{1}{N_b} [(1-s_{t}c_b)l_i^*(-\alpha_{b,i}^{t-1}) + s_{t}c_b l_i^*(-u_{i}^{t-1})  - l_i^*(-\alpha_i^{t-1})\\ &+(w^{t-1})^Tx_is_{t}c_b(u_{i}^{t-1}-\alpha_{b,i}^{t-1}) + \frac{\gamma_t}{2\lambda}(s_{t}c_b(u_{i}^{t-1}-\alpha_{b,i}^{t-1}))^2 ]
\end{align*}

and thus from $\alpha_{b,i}^{t-1} = \alpha_i^{t-1}$ and inputting $c_b=\frac{N_b}{N}$, we have that
\begin{align*}
    \mathbb{E}[A-B|\alpha^{t-1}] &\geq -\gamma_t H\sum_{i=1}^N \frac{1}{|\mathcal{B}_i|}  \sum_{b \in \mathcal{B}_i} \frac{1}{N} [-s_tl_i^*(-\alpha_{i}^{t-1}) + s_t l_i^*(-u_{i}^{t-1}) +s_t(w^{t-1})^Tx_i(u_{i}^{t-1}-\alpha_{i}^{t-1}) + \frac{\gamma_t s_t^{2} c_b}{2\lambda}(u_{i}^{t-1}-\alpha_{i}^{t-1})^2 ].
\end{align*}

We can safely remove $c_b\leq 1$ from the final term without changing the bound to get
\begin{align*}
    \mathbb{E}[A-B|\alpha^{t-1}] &\geq -\gamma_t H\sum_{i=1}^N \frac{1}{|\mathcal{B}_i|}  \sum_{b \in \mathcal{B}_i} \frac{1}{N} [-s_tl_i^*(-\alpha_{i}^{t-1}) + s_t l_i^*(-u_{i}^{t-1}) +s_t(w^{t-1})^Tx_i(u_{i}^{t-1}-\alpha_{i}^{t-1}) + \frac{\gamma_t s_t^{2}}{2\lambda}(u_{i}^{t-1}-\alpha_{i}^{t-1})^2 ].
\end{align*}

Now from convex conjugates we know that $l_i(x_i^Tw^{t-1})=-l_i^*(-u_i^{t-1})-u_i^{t-1}x_i^Tw^{t-1}$, and thus
\begin{align*}
    \mathbb{E}[A-B|\alpha^{t-1}] &\geq -\gamma_t H\sum_{i=1}^N \frac{1}{|\mathcal{B}_i|}  \sum_{b \in \mathcal{B}_i} \frac{1}{N} [-s_tl_i^*(-\alpha_{i}^{t-1}) -s_t l_i(x_i^Tw^{t-1}) -s_t(w^{t-1})^Tx_i\alpha_{i}^{t-1} + \frac{\gamma_t s_t^{2}}{2\lambda}(u_{i}^{t-1}-\alpha_{i}^{t-1})^2 ].
\end{align*}

We simplify the expression to obtain 
\begin{align}
    \label{canceled}
    \mathbb{E}[A-B|\alpha^{t-1}] &\geq - \frac{\gamma_t H}{N}\sum_{i=1}^N [-s_tl_i^*(-\alpha_{i}^{t-1}) -s_t l_i(x_i^Tw^{t-1}) -s_t(w^{t-1})^Tx_i\alpha_{i}^{t-1} + \frac{\gamma_t s_t^{2}}{2\lambda}(u_{i}^{t-1}-\alpha_{i}^{t-1})^2 ].
\end{align}

The primal-dual gap at iteration $t-1$ is defined as 
\begin{align*}
    P(w^{t-1})-D(\alpha^{t-1}) &= (\frac{\lambda}{2}||w^{t-1}||^2 + \frac{1}{N}\sum_{i=1}^Nl_i((w^{t-1})^Tx_i)) - (-\frac{\lambda}{2}||\frac{1}{\lambda N}\sum_{i=1}^N \alpha_i^{t-1}x_i||^2-\frac{1}{N}\sum_{i=1}^{N}l_i^*(-\alpha_i^{t-1}))\\
    &= \frac{1}{N}\sum_{i=1}^{N}l_i^*(-\alpha_i^{t-1}) + \frac{1}{N}\sum_{i=1}^Nl_i((w^{t-1})^Tx_i) + \lambda||w^{t-1}||^2\\
    &= \frac{1}{N}\sum_{i=1}^{N}l_i^*(-\alpha_i^{t-1}) + \frac{1}{N}\sum_{i=1}^Nl_i((w^{t-1})^Tx_i) + \lambda (w^{t-1})^Tw^{t-1}\\
    &=\frac{1}{N}\sum_{i=1}^{N}l_i^*(-\alpha_i^{t-1}) + \frac{1}{N}\sum_{i=1}^Nl_i((w^{t-1})^Tx_i) + \lambda (w^{t-1})^T(\frac{1}{\lambda N}\sum_{i=1}^N\alpha_i^{t-1} x_i)\\
    &=\frac{1}{N}\sum_{i=1}^N[l_i((w^{t-1})^Tx_i) + l_i^*(-\alpha_i^{t-1})+\alpha^{t-1}_ix_i^Tw^{t-1}].
\end{align*}

We plug this primal-dual gap into (\ref{canceled}) to obtain
\begin{align*}
    \mathbb{E}[A-B|\alpha^{t-1}] &\geq s_t \gamma_t H [P(w^{t-1})-D(\alpha^{t-1})] -\frac{\gamma_t^2s_t^2 H}{2N\lambda}\sum_{i=1}^N (u_{i}^{t-1}-\alpha_{i}^{t-1})^2
\end{align*}

and after dividing by $N$
\begin{align*}
    \mathbb{E}[D(\alpha^t)-D(\alpha^{t-1})|\alpha^{t-1}] &\geq \frac{s_t \gamma_t H}{N} [P(w^{t-1})-D(\alpha^{t-1})] -\frac{\gamma_t^2s_t^2 H}{2N^2\lambda}\sum_{i=1}^N (u_{i}^{t-1}-\alpha_{i}^{t-1})^2.
\end{align*}

By assumption, $\gamma_t=1$ and setting $G^{t-1}=\frac{1}{2N\lambda}\sum_{i=1}^N(u_i^{t-1}-\alpha_i^{t-1})^2$, we get
\begin{align}
\label{expec}
    \mathbb{E}[D(\alpha^t)-D(\alpha^{t-1})|\alpha^{t-1}] &\geq \frac{s_t H}{N} [P(w^{t-1})-D(\alpha^{t-1})] -\frac{s_t^2 H}{N}G^{t-1}.
\end{align}

We know that $\varepsilon_D^{t-1}=D(\alpha^*)-D(\alpha^{t-1})\leq P(w^{t-1})-D(\alpha^{t-1})$ and $D(\alpha^t)-D(\alpha^{t-1})=\varepsilon_D^{t-1}-\varepsilon_D^t$. It must also hold that $\mathbb{E}[D(\alpha^t)-D(\alpha^{t-1})|\alpha^{t-1}]=\mathbb{E}[\varepsilon_D^{t-1}-\varepsilon_D^t | \alpha^{t-1}] = \varepsilon_D^{t-1} - \mathbb{E}[\varepsilon_D^t | \alpha^{t-1}]$. Together with (\ref{expec}) this yields

\begin{align*}
    \varepsilon_D^{t-1} - \mathbb{E}[\varepsilon_D^t | \alpha^{t-1}] &\geq \frac{s_t H}{N} \varepsilon_D^{t-1} -\frac{s_t^2 H}{N}G^{t-1}.
\end{align*}

Let $G^{t-1}\leq G$. Then,

\begin{align*}
    \mathbb{E}[\varepsilon_D^t | \alpha^{t-1}] &\leq (1-\frac{s_t H}{N}) \varepsilon_D^{t-1} +\frac{s_t^2 H}{N}G.
\end{align*}

Taking the expectation of both sides and using the law of iterated expectation we obtain
\begin{align*}
    \mathbb{E}[\varepsilon_D^t] &\leq (1-\frac{s_t H}{N}) \mathbb{E}[\varepsilon_D^{t-1}] +\frac{s_t^2 H}{N}G.
\end{align*}

Finally we need to bound G. From Lemma \ref{1} we know that each $|\alpha_{b,i}|<L$ due to the choice of LocalDualMethod. Furthermore, $|u_i|<L$ because $l_i$ are $L$-Lipschitz. We conclude that
\begin{equation*}
    G = \frac{2L^2}{\lambda}.
\end{equation*}

This concludes the proof.

\end{proof}

\subsection{Proof of Theorem 2}
\begin{proof}
This is proved by using induction on $t$. It also uses the fact that $1+x \leq e^x$. Let $s_t=1$ for $t\leq t_0$ and $s_t=\frac{1}{1+\frac{H}{2N}(t-t_0)}$ for $t>t_0$.

The base case of $t=t_0$ follows as 

\begin{align*}
    \mathbb{E}[\varepsilon_D^t] &\leq (1-\frac{H}{N})^t \mathbb{E}[\varepsilon_D^0] + \frac{H}{N} G\sum_{\tau=1}^{t-1}(1-\frac{H}{N})^\tau\\
    &\leq (\exp(\frac{-H}{N}))^t\mathbb{E}[\varepsilon_D^0] + \frac{H}{N}G(\frac{1-(1-\frac{H}{N})^{t-1}}{1-(1-\frac{H}{N})})\\
    &\leq (\exp(\frac{-H}{N}\log(\mathbb{E}[\varepsilon_D^0]/G)))\mathbb{E}[\varepsilon_D^0] + G\underbrace{(1-(1-\frac{H}{N})^{t-1})}_{\leq 1}\\
    &\leq (\frac{G}{\varepsilon_D^0})^\frac{H}{N}\mathbb{E}[\varepsilon_D^0] + G\\
    &\leq G+G\\
    &=2G.
\end{align*}
We now assume eq. 4 (from main paper) holds for some $t$, and we need to prove that it also holds for $t+1$. We start with
\begin{align*}
    \mathbb{E}[\varepsilon_D^{t+1}] &\leq (1-\frac{sH}{N})\mathbb{E}[\varepsilon_D^t]+\frac{s^2H}{N}G\\
    &\leq (1-\frac{sH}{N})(\frac{2G}{1+\frac{H}{2N}(t-t_0)})+\frac{s^2H}{N}G\\
    &=(1-\frac{H/N}{1+\frac{H}{2N}(t-t_0)})(\frac{2G}{1+\frac{H}{2N}(t-t_0)})+\frac{HG/N}{(1+\frac{H}{2N}(t-t_0))^2}\\
    &=2G\underbrace{\frac{1+\frac{H}{2N}(t-t_0)-\frac{H}{2N}}{(1+\frac{H}{2N}(t-t_0))^2}}_{U}.
\end{align*}

We bound $U$ by
\begin{align*}
    U&=\frac{1+\frac{H}{2N}(t-t_0)-\frac{H}{2N}}{(1+\frac{H}{2N}(t-t_0))^2}\\
    &=\frac{1}{1+\frac{H}{2N}(t+1-t_0)}\frac{(1+\frac{H}{2N}(t+1-t_0))(1+\frac{H}{2N}(t-1-t_0))}{(1+\frac{H}{2N}(t-t_0))^2}\\
    &=\frac{1}{1+\frac{H}{2N}(t+1-t_0)}\frac{1+\frac{H}{N}(t-t_0) + \frac{H^2}{4N^2}((t-t_0)^2-1)}{1+\frac{H}{N}(t-t_0)+\frac{H^2}{4N^2}(t-t_0)^2}\\
    &=\frac{1}{1+\frac{H}{2N}(t+1-t_0)}(1-\frac{H^2}{4N^2(1+\frac{H}{N}(t-t_0)+\frac{H^2}{4N^2}(t-t_0)^2)})\\
    &=\frac{1}{1+\frac{H}{2N}(t+1-t_0)}\underbrace{(1-\frac{H^2}{4N^2(1+\frac{H}{2N}(t-t_0))^2})}_{\leq 1}\\
    &\leq \frac{1}{1+\frac{H}{2N}(t+1-t_0)}.
\end{align*}

Therefore,
\begin{equation}
    \mathbb{E}[\varepsilon_D^{t+1}]\leq \frac{2G}{1+\frac{H}{2N}(t+1-t_0)}
\end{equation}

completing the proof.
\end{proof}

\subsection{Proof of Theorem 3}
\begin{proof}
This proof follows the proof of Theorem 1 closely. We first note some important facts that are key to the proof for the horizontal case.
\begin{enumerate}
    \item We have $Q=|\mathcal{B}_i|=1$, therefore each dual variable $\alpha_i$ only belongs to a single client. We simplify the notation by omitting the dual variable subscript pertaining to the client, e.g. we write $\alpha_i^t$ instead of $\alpha_{b,i}^t$.
    \item The inner product for sample $i$, $x_i^Tw^t$ can be found completely on the single client and no aggregation by the server is needed.
    \item The relationship $w^{t-1}=\frac{1}{\lambda N}\sum_{i=1}^N\alpha_i^{t-1} x_i$ still holds. Because some of the $\alpha_i$ have not been updated in iteration $t-1$, the algorithm has stored the contributions of each component in $\hat{w_{0,k,m}}$ of the sum so that $w^{t-1}$ can be exactly found.

\end{enumerate}
From the proof of Theorem 1, we have
\begin{align*}
    A-B&\geq -\sum_{i=1}^N [\gamma_t l_i^*(-\alpha_i^{t-1}-\Delta \alpha_i^{t})  -\gamma_tl_i^*(-\alpha_i^{t-1}) +\gamma_t(w^{t-1})^Tx_i\Delta \alpha_i^t + \frac{\gamma_t^2}{2\lambda}(\Delta \alpha_i^t)^2 ].\\
\end{align*}
In the hybrid case, we compute $\Delta \alpha_i^t = \frac{1}{Q}\sum_{b\in\mathcal{B}_i} \Delta \alpha_{b,i}^t$. However, in this case, this can be simplified due to $Q=|\mathcal{B}_i|=1$.

Now we must find the probability that a particular $\alpha_{i}$ is updated. This is probability $PH/N$, because there are a total of $N$ samples and there are $PH$ total expected updates per outer iteration. So taking the expectation over these available clients while conditioning on the previous state, $\alpha^{t-1}$, yields
\begin{align*}
    \mathbb{E}[A-B|\alpha^{t-1}]&\geq -\frac{\gamma_t PH}{N}\sum_{i=1}^N [ l_i^*(-\alpha_i^{t-1}-\Delta \alpha_i^{t})  -l_i^*(-\alpha_i^{t-1}) +(w^{t-1})^Tx_i\Delta \alpha_i^t + \frac{\gamma_t}{2\lambda}(\Delta \alpha_i^t)^2 ].\\
\end{align*}

Due to the choice of LocalDualMethod, for any $s_t \in [0,1]$, we obtain
\begin{align*}
    \mathbb{E}[A-B|\alpha^{t-1}]&\geq  -\frac{PH\gamma_t}{N}\sum_{i=1}^N [- s_tl_i^*(-\alpha_i^{t-1})+ s_tl_i^*(-u_i^{t-1})  +(w^{t-1})^Tx_is_t(u_i^{t-1}-\alpha_i^{t-1}) + \frac{\gamma_t}{2\lambda}s_t^2(u_i^{t-1}-\alpha_i^{t-1})^2 ].
\end{align*}

In the same vein as the in the proof of Theorem 1, we derive
\begin{align*}
    \mathbb{E}[A-B|\alpha^{t-1}] &\geq -\frac{PH\gamma_t}{N}\sum_{i=1}^N [- s_tl_i^*(-\alpha_i^{t-1}) -s_tl_i(x_i^Tw^{t-1})  -s_t(w^{t-1})^Tx_i\alpha_i^{t-1} + \frac{\gamma_t}{2\lambda}s_t^2(u_i^{t-1}-\alpha_i^{t-1})^2 ]
\end{align*}

and
\begin{align*}
    \mathbb{E}[A-B|\alpha^{t-1}] &\geq PH\gamma_ts_t[P(w^{t-1})-D(\alpha^{t-1})] -\frac{PH\gamma_t^2}{2\lambda N}s_t^2(u_i^{t-1}-\alpha_i^{t-1})^2 ]
\end{align*}

which in turn yields

\begin{align*}
    \mathbb{E}[D(\alpha^t)-D(\alpha^{t-1})|\alpha^{t-1}] &\geq \frac{PHs_t}{N}[P(w^{t-1})-D(\alpha^{t-1})] -\frac{PHs_t^2}{N}G^{t-1}.
\end{align*}

The rest of the proof is the same as in Theorem 1 and 2.
\end{proof}

\subsection{Proof of Theorem 4}
\begin{proof}
We first note some important points.
\begin{enumerate}
    \item If set $c$ is selected for updates, then for all $b\in \mathcal{B}_c$, we have $\alpha_{b,i}^{t-1} = \alpha_i^{t-1}$ because all clients in that set are sent the global dual variables from the server before performing local updates. 
    \item The dot product scalar, $x_i^Tw$, that is calculated and passed from the server at the start of iteration $t$ is representative of the true $x_i^Tw^{t-1}$. This is because if the feature indices are updated on a particular client, then that component of the dot product is computed and sent to the server and the server stores the components of the dot product from feature indices that were not updated.
    \item We frequently use Lemma \ref{1} and the choice of the LocalDualMethod to bound $|\alpha_i|<L$. Furthermore, $|u_i|<L$ because $l_i$ are $L$-Lipschitz.
\end{enumerate}

We examine the change in the dual objective after one outer iteration. We assume that $t\geq C$. We start with
\begin{align*}
   N[D(\alpha^{t}) - D(\alpha^{t-1})]  \geq \underbrace{[-\sum_{i=1}^N l_i^*(-\alpha_i^t)-\frac{\lambda N}{2}||\textbf{A}\alpha^t||^2]}_{A} - \underbrace{[-\sum_{i=1}^N l_i^*(-\alpha_i^{t-1})-\frac{\lambda N}{2}||\textbf{A}\alpha^{t-1}||^2]}_{B}.
\end{align*}

We first examine A by
\begin{align*}
    A &= -\sum_{i=1}^N l_i^*(-\alpha_i^t)-\frac{\lambda N}{2}||\textbf{A}\alpha^t||^2\\
    &= -\sum_{i=1}^Nl_i^*(-\alpha_i^{t-1}-\Delta \alpha_i^t)-\frac{\lambda N}{2}||\textbf{A}\alpha^{t-1} + \textbf{A}\Delta \alpha^t||^2\\
    &=-\sum_{i=1}^Nl_i^*(-\alpha_i^{t-1}-\Delta \alpha_i^t) - \frac{\lambda N}{2}[||\textbf{A}\alpha^{t-1}||^2 + 2(\textbf{A}\alpha^{t-1})^T(\textbf{A}\Delta \alpha^t)+ ||\textbf{A}\Delta \alpha^t||^2].
\end{align*}

Then we have
\begin{align*}
    A-B &\geq -\sum_{i=1}^N [l_i^*(-\alpha_i^{t-1}-\Delta \alpha_i^t) - l_i^*(-\alpha_i^{t-1})] -\lambda N(\textbf{A}\alpha^{t-1})^T(\textbf{A}\Delta \alpha^t) - \frac{\lambda N}{2}||\textbf{A}\Delta \alpha^t||^2\\
    &\geq -\sum_{i=1}^N [l_i^*(-\alpha_i^{t-1}-\Delta \alpha_i^t) - l_i^*(-\alpha_i^{t-1})] -\sum_{i=1}^N (\textbf{A}\alpha^{t-1})^T\Delta \alpha_i^t x_i - \frac{1}{2\lambda}\sum_{i=1}^N (\Delta \alpha_i^t)^2\\
    &=-\sum_{i=1}^N [l_i^*(-\alpha_i^{t-1}-\Delta \alpha_i^t)-l_i^*(-\alpha_i^{t-1})+(\textbf{A}\alpha^{t-1})^T\Delta \alpha_i^t x_i +\frac{1}{2\lambda} (\Delta \alpha_i^t)^2].
\end{align*}

We assume that at iteration $t$, a set $c$ is selected and the clients in that set are updated. We have that $\Delta \alpha_i^t = \frac{\gamma_t}{Q/C}\sum_{b \in \mathcal{B}_c}\Delta \alpha_{b,i}^{t}$ and
\begin{align*}
    A-B &\geq -\sum_{i=1}^N [l_i^*(-\alpha_i^{t-1}-\frac{\gamma_t}{Q/C}\sum_{b \in \mathcal{B}_c}\Delta \alpha_{b,i}^{t})-l_i^*(-\alpha_i^{t-1})+(\textbf{A}\alpha^{t-1})^T(\frac{\gamma_t}{Q/C}\sum_{b \in \mathcal{B}_c}\Delta \alpha_{b,i}^{t}) x_i +\frac{1}{2\lambda} (\frac{\gamma_t}{Q/C}\sum_{b \in \mathcal{B}_c}\Delta \alpha_{b,i}^{t})^2].
\end{align*}
Using convexity we have that
\begin{align*}
    l_i^*(-\alpha_i^{t-1}-\frac{\gamma_t}{Q/C}\sum_{b \in \mathcal{B}_c}\Delta \alpha_{b,i}^{t}) &= l_i^*(-\frac{1}{Q/C}\sum_{b \in \mathcal{B}_c}\alpha_i^{t-1}-\frac{\gamma_t}{Q/C}\sum_{b \in \mathcal{B}_c}\Delta \alpha_{b,i}^{t})\\
    &\leq \frac{1}{Q/C} \sum_{b\in \mathcal{B}_c} l_i^*(-\alpha_i^{t-1}-\gamma_t\Delta \alpha_{b,i}^{t}).\\
\end{align*}

Returning to $A-B$ we derive
\begin{align*}
    A-B &\geq -\sum_{i=1}^N [\frac{1}{Q/C} \sum_{b\in \mathcal{B}_c} l_i^*(-\alpha_i^{t-1}-\gamma_t\Delta \alpha_{b,i}^{t}) - l_i^*(-\alpha_i^{t-1})+\frac{\gamma_t}{Q/C}\sum_{b \in \mathcal{B}_c}(\textbf{A}\alpha^{t-1})^T(\Delta \alpha_{b,i}^{t}) x_i +\frac{1}{2\lambda} \frac{1}{Q/C}\sum_{b \in \mathcal{B}_c}(\gamma_t\Delta \alpha_{b,i}^{t})^2]\\
    &=-\frac{1}{Q/C} \sum_{i=1}^N \sum_{b\in\mathcal{B}_c} [ l_i^*(-\alpha_i^{t-1}-\gamma_t\Delta \alpha_{b,i}^{t}) - l_i^*(-\alpha_i^{t-1})+\gamma_t(\textbf{A}\alpha^{t-1})^T(\Delta \alpha_{b,i}^{t}) x_i +\frac{1}{2\lambda} (\gamma_t\Delta \alpha_{b,i}^{t})^2].
\end{align*}

The stochastic component of this algorithm is deciding if a particular sample $i$ is selected for the corresponding dual variable to be updated. If the sample-client pair is not selected to be updated, then $\Delta \alpha_{b,i}^t$ is zero, and the entire right hand size of the inequality is zero. There are $HQ/C$ possible updates in a given outer iteration and $NQ/C$ total sample-client combinations. Therefore, the probability of a given sample-client pair to be selected to be updated is $H/N$. By conditioning on $\alpha_i^{t-1}$, we get

\begin{align}
    \mathbb{E}[A-B|\alpha^{t-1}] &\geq -\frac{H}{QN/C} \sum_{i=1}^N \sum_{b\in\mathcal{B}_c} [ l_i^*(-\alpha_i^{t-1}-\gamma_ts_{b,i}^t(u_i^{t-1}-\alpha_i^{t-1})) - l_i^*(-\alpha_i^{t-1})+\gamma_t(\textbf{A}\alpha^{t-1})^T(s_{b,i}^t(u_i^{t-1}-\alpha_i^{t-1})) x_i\nonumber\\ &+\frac{1}{2\lambda} (\gamma_ts_{b,i}^t(u_i^{t-1}-\alpha_i^{t-1}))^2] \nonumber\\
    &=-\frac{H}{QN/C} \sum_{i=1}^N \sum_{b\in\mathcal{B}_c} [ l_i^*(-\alpha_i^{t-1}-\gamma_ts_{b,i}^t(u_i^{t-1}-\alpha_i^{t-1})) - l_i^*(-\alpha_i^{t-1})+\gamma_ts_{b,i}^t(\textbf{A}\alpha^{t-1})^T(u_i^{t-1}-\alpha_i^{t-1}) x_i\nonumber\\ &+\frac{1}{2\lambda} (\gamma_ts_{b,i}^t(u_i^{t-1}-\alpha_i^{t-1}))^2].
    \label{vert_comp}
\end{align}

We now express $\textbf{A}\alpha^{t-1}$ in terms of $w^{t-1}$. In the vertical case with incomplete client participation, $w^{t-1}$ is different because only a subset of clients are available at each outer iteration. Without loss of generality, let us assume that at iteration $t-1$ set 1 was updated, at iteration $t-2$ set 2 was updated, etc. We define $x_{c,i}$ as the portion of features that are available to clients in set $c$ with padded zeros at indices that are not available to the clients in set $c$.

Then based on PrimalAggregation we have
\begin{align*}
    w^{t-1} &= \frac{1}{\lambda N}\sum_{i=1}^N \alpha_i^{t-1}x_{1,i} + \frac{1}{\lambda N}\sum_{i=1}^N \alpha_i^{t-2}x_{2,i} + \dots + \frac{1}{\lambda N} \sum_{i=1}^N \alpha_i^{t-C}x_{C,i}\\
    &= \textbf{A}\alpha^{t-1} - \frac{1}{\lambda N}\sum_{c=2}^C\sum_{i=1}^N \alpha_i^{t-1}x_{c,i} + \frac{1}{\lambda N}\sum_{c=2}^C\sum_{i=1}^N \alpha_i^{t-c}x_{c,i}.
\end{align*}

We next manipulate this equation to
\begin{align*}
    \textbf{A}\alpha^{t-1} &= w^{t-1} + \frac{1}{\lambda N}\sum_{c=2}^C\sum_{i=1}^N \alpha_i^{t-1}x_{c,i} - \frac{1}{\lambda N}\sum_{c=2}^C\sum_{i=1}^N \alpha_i^{t-c}x_{c,i}\\
    &= w^{t-1} + \frac{1}{\lambda N}\sum_{c=2}^C\sum_{i=1}^N (\alpha_i^{t-1} -\alpha_i^{t-c})x_{c,i}\\
    &= w^{t-1} + \frac{1}{\lambda N}\sum_{c=2}^C\sum_{i=1}^N \sum_{d=1}^{c-1} \Delta \alpha_i^{t-d}x_{c,i}
\end{align*}

and substitute this into (\ref{vert_comp}) to obtain
\begin{align*}
    \mathbb{E}[A-B|\alpha^{t-1}] &\geq -\frac{H}{QN/C} \sum_{i=1}^N \sum_{b\in\mathcal{B}_c} [ l_i^*(-\alpha_i^{t-1}-\gamma_ts_{b,i}^t(u_i^{t-1}-\alpha_i^{t-1})) - l_i^*(-\alpha_i^{t-1})+\gamma_ts_{b,i}^t(w^{t-1})^T(u_i^{t-1}-\alpha_i^{t-1}) x_i\\ &+\gamma_ts_{b,i}^t(\frac{1}{\lambda N}\sum_{c=2}^C\sum_{j=1}^N \sum_{d=1}^{c-1} \Delta \alpha_j^{t-d}x_{c,j})^T(u_i^{t-1}-\alpha_i^{t-1}) x_i +\frac{1}{2\lambda} (\gamma_ts_{b,i}^t(u_i^{t-1}-\alpha_i^{t-1}))^2].\\
\end{align*}

Based on LocalDualMethod
\begin{align*}
    s_{b,i} = \argmax_{s\in[0,1]} -l_i^*(-\alpha_i^{t-1}-\gamma_ts(u_i^{t-1}-\alpha_i^{t-1}))-\gamma_ts(u_i^{t-1}-\alpha_i^{t-1})(w^{t-1})^Tx_i-\frac{\gamma_t^2s^2}{2\lambda}(u_i^{t-1}-\alpha_i^{t-1})^2,
\end{align*}
therefore, we can replace $s_{b,i}$ in the terms included in the maximization with any value in $[0,1]$ and the bound still holds. Furthermore, we know that $s_{b,i}$ are the same for $b \in \mathcal{B}_c$. Therefore, we simply set $s_{b,i}=1$ in the terms in maximization and use convexity of $l_i^*$ to get

\begin{align}
    \mathbb{E}[A-B|\alpha^{t-1}] &\geq -\frac{H}{N}\sum_{i=1}^N [ l_i^*(-\alpha_i^{t-1}-\gamma_t(u_i^{t-1}-\alpha_i^{t-1})) - l_i^*(-\alpha_i^{t-1})+\gamma_t(w^{t-1})^T(u_i^{t-1}-\alpha_i^{t-1}) x_i\nonumber\\ &+\gamma_ts_{b,i}^t(\frac{1}{\lambda N}\sum_{c=2}^C\sum_{j=1}^N \sum_{d=1}^{c-1} \Delta \alpha_j^{t-d}x_{c,j})^T(u_i^{t-1}-\alpha_i^{t-1}) x_i +\frac{1}{2\lambda} (\gamma_t(u_i^{t-1}-\alpha_i^{t-1}))^2]\nonumber\\
    &\geq -\frac{H}{N}\sum_{i=1}^N [ -\gamma_tl_i^*(-\alpha_i^{t-1}) +\gamma_tl_i^*(-u_i^{t-1})+\gamma_t(w^{t-1})^T((u_i^{t-1}-\alpha_i^{t-1})) x_i\nonumber\\ &+\gamma_ts_{b,i}^t(\frac{1}{\lambda N}\sum_{c=2}^C\sum_{j=1}^N \sum_{d=1}^{c-1} \Delta \alpha_j^{t-d}x_{c,j})^T(u_i^{t-1}-\alpha_i^{t-1}) x_i +\frac{1}{2\lambda} (\gamma_t(u_i^{t-1}-\alpha_i^{t-1}))^2].
    \label{vert_comp2}
\end{align}

By using the fact that $s_{b,i}^t \in [0,1]$, we bound the next to last term as
\begin{align*}
    \gamma_t s_{b,i}^t(\frac{1}{\lambda N}\sum_{c=2}^C\sum_{j=1}^N \sum_{d=1}^{c-1} \Delta \alpha_j^{t-d}x_{c,j})^T(u_i^{t-1}-\alpha_i^{t-1})x_i &= \gamma_ts_{b,i}^t(u_i^{t-1}-\alpha_i^{t-1}) (\frac{1}{\lambda N}\sum_{c=2}^C\sum_{j=1}^N \sum_{d=1}^{c-1} \Delta \alpha_j^{t-d}x_{c,j}^Tx_i)\\
    &\leq |\gamma_ts_{b,i}^t(u_i^{t-1}-\alpha_i^{t-1}) (\frac{1}{\lambda N}\sum_{c=2}^C\sum_{j=1}^N \sum_{d=1}^{c-1} \Delta \alpha_j^{t-d}x_{c,j}^Tx_i)|\\
    &= \frac{\gamma_ts_{b,i}^t}{\lambda N}|(u_i^{t-1}-\alpha_i^{t-1})| |(\sum_{c=2}^C\sum_{j=1}^N \sum_{d=1}^{c-1} \Delta \alpha_j^{t-d}x_{c,j}^Tx_i)|\\
    &\leq \frac{\gamma_t|(u_i^{t-1}-\alpha_i^{t-1})|}{\lambda N} \sum_{c=2}^C\sum_{j=1}^N \sum_{d=1}^{c-1}|\Delta \alpha_j^{t-d}x_{c,j}^Tx_i|\\
    &= \frac{\gamma_t|(u_i^{t-1}-\alpha_i^{t-1})|}{\lambda N} \sum_{c=2}^C\sum_{j=1}^N \sum_{d=1}^{c-1}|\Delta \alpha_j^{t-d}||x_{c,j}^Tx_i|\\
    &\leq \frac{\gamma_t|(u_i^{t-1}-\alpha_i^{t-1})|}{\lambda N} \sum_{c=2}^C\sum_{j=1}^N \sum_{d=1}^{c-1}|\Delta \alpha_j^{t-d}|\\
    &= \frac{\gamma_t|(u_i^{t-1}-\alpha_i^{t-1})|}{\lambda N} \sum_{c=2}^C\sum_{j=1}^N \sum_{d=1}^{c-1}|\frac{\gamma_{t-d}}{Q/C}\sum_{b\in\mathcal{B}_c}\Delta \alpha_{b,j}^{t-d}|.
\end{align*}
By definition, $\Delta\alpha_{b,j}^{t-d}=s_{b,j}^t(u_j^{t-d-1}-\alpha_j^{t-d-1})$ and thus

\begin{align*}
    \gamma_t (\frac{1}{\lambda N}\sum_{c=2}^C\sum_{i=1}^N \sum_{d=1}^{c-1} \Delta \alpha_i^{t-d}x_{c,i})^T(u_i^{t-1}-\alpha_i^{t-1})x_i 
    &\leq \frac{\gamma_t|(u_i^{t-1}-\alpha_i^{t-1})|}{\lambda N} \sum_{c=2}^C\sum_{j=1}^N \sum_{d=1}^{c-1}\gamma_{t-d}|u_j^{t-d-1}-\alpha_j^{t-d-1}|\cdot \max_{b\in\mathcal{B}_c}|s_{b,j}^t|\\
    &\leq \frac{2L\gamma_t}{\lambda N} \sum_{c=2}^C\sum_{j=1}^N \sum_{d=1}^{c-1}2L\gamma_{t-d}\\
    &\leq \frac{4L^2\gamma_t}{\lambda N} \sum_{c=2}^C\sum_{j=1}^N \sum_{d=1}^{c-1}\gamma_{t-d}\\
    &\leq \frac{4L^2\gamma_t}{\lambda} \sum_{c=2}^C (c-1)\gamma_{t-c+1}\\
    &\leq \frac{4L^2\gamma_t(C-1)^2}{\lambda} \gamma_{t-C+1}\\
    &\leq \frac{4L^2\gamma_{t-C+1}^2(C-1)^2}{\lambda}.\\
\end{align*}

We continue with (\ref{vert_comp2}) to derive
\begin{align*}
    \mathbb{E}[A-B|\alpha^{t-1}]
    &\geq -\frac{H}{N}\sum_{i=1}^N [ -\gamma_t l_i^*(-\alpha_i^{t-1}) + \gamma_t l_i^*(-u_i^{t-1})+\gamma_t(w^{t-1})^T(u_i^{t-1}-\alpha_i^{t-1}) x_i \\ &+ \frac{4L^2\gamma_{t-C+1}^2(C-1)^2}{\lambda}+\frac{1}{2\lambda} (\gamma_t(u_i^{t-1}-\alpha_i^{t-1}))^2].\\
\end{align*}

Now from convex conjugates we know that $l_i(x_i^Tw^{t-1})=-l_i^*(-u_i^{t-1})-u_i^{t-1}x_i^Tw^{t-1}$ which yields
\begin{align}
    \mathbb{E}[A-B|\alpha^{t-1}]
    &\geq -\frac{H}{N}\sum_{i=1}^N [ -\gamma_t l_i^*(-\alpha_i^{t-1}) - \gamma_t l_i(x_i^Tw^{t-1})-\gamma_t(w^{t-1})^T\alpha_i^{t-1} x_i \nonumber\\ &+ \frac{4L^2\gamma_{t-C+1}^2(C-1)^2}{\lambda}+\frac{1}{2\lambda} (\gamma_t(u_i^{t-1}-\alpha_i^{t-1}))^2].\\
    \label{vert_comp3}
\end{align}

We next explore the primal-dual gap. We set $D=\frac{1}{\lambda N}\sum_{c=2}^C\sum_{i=1}^N \sum_{d=1}^{c-1} \Delta \alpha_i^{t-d}x_{c,i}$ to obtain
\begin{align*}
    P(w^{t-1})-D(\alpha^{t-1}) &= (\frac{\lambda}{2}||w^{t-1}||^2 + \frac{1}{N}\sum_{i=1}^Nl_i(x_i^Tw^{t-1}))-(-\frac{\lambda}{2}|| \textbf{A}\alpha^{t-1}||^2-\frac{1}{N}l_i^*(-\alpha_i^{t-1}))\\
    &= (\frac{\lambda}{2}||w^{t-1}||^2 + \frac{1}{N}\sum_{i=1}^Nl_i(x_i^Tw^{t-1}))-(-\frac{\lambda}{2}||w^{t-1}+D||^2-\frac{1}{N}l_i^*(-\alpha_i^{t-1})).\\
\end{align*}

We have that $\lambda ||w^{t-1}||^2 = \lambda(w^{t-1})^T w^{t-1} = \lambda (w^{t-1})^T(\textbf{A}\alpha^{t-1}-D)=\lambda (w^{t-1})^T\textbf{A}\alpha^{t-1}-\lambda (w^{t-1})^TD$.

This results in
\begin{align*}
    P(w^{t-1})-D(\alpha^{t-1}) &= \frac{1}{N}\sum_{i=1}^N[l_i(x_i^Tw^{t-1})+l_i^*(-\alpha_i^{t-1})] + \lambda (w^{t-1})^T\textbf{A}\alpha^{t-1} + \frac{\lambda}{2}||D||^2\\
    &= \frac{1}{N}\sum_{i=1}^N[l_i(x_i^Tw^{t-1})+l_i^*(-\alpha_i^{t-1}) + (w^{t-1})^T\alpha_i^{t-1}x_i] + \frac{\lambda}{2}||D||^2\\
\end{align*}

and in turn
\begin{align*}
    P(w^{t-1})-D(\alpha^{t-1}) - \frac{\lambda}{2}||D||^2 
    &= \frac{1}{N}\sum_{i=1}^N[l_i(x_i^Tw^{t-1})+l_i^*(-\alpha_i^{t-1}) + (w^{t-1})^T\alpha_i^{t-1}x_i]. \\
\end{align*}

We substitute this into (\ref{vert_comp3}) to get
\begin{align}
    \mathbb{E}[A-B|\alpha^{t-1}]
    &\geq H\gamma_t[P(w^{t-1})-D(\alpha^{t-1})] -\frac{H\gamma_t \lambda}{2}||D||^2 -\frac{H}{N}\sum_{i=1}^N [\frac{4L^2\gamma_{t-C+1}^2(C-1)^2}{\lambda}+\frac{1}{2\lambda} (\gamma_t(u_i^{t-1}-\alpha_i^{t-1}))^2].
    \label{vert_comp4}
\end{align}

To bound $||D||^2$ we use
\begin{align*}
    ||D||^2 &= ||\frac{1}{\lambda N}\sum_{c=2}^C\sum_{i=1}^N \sum_{d=1}^{c-1} \Delta \alpha_i^{t-d}x_{c,i}||^2\\
    &= \frac{1}{\lambda^2 N^2}||\sum_{c=2}^C\sum_{i=1}^N \sum_{d=1}^{c-1} \Delta \alpha_i^{t-d}x_{c,i}||^2\\
    &\leq \frac{(C-1)^2}{\lambda^2 N}\sum_{c=2}^C\sum_{i=1}^N \sum_{d=1}^{c-1}(\Delta \alpha_i^{t-d})^2||x_{c,i}||^2\\
    &\leq \frac{(C-1)^2}{\lambda^2 N}\sum_{c=2}^C\sum_{i=1}^N \sum_{d=1}^{c-1}(\Delta \alpha_i^{t-d})^2\\
    &= \frac{(C-1)^2}{\lambda^2 N}\sum_{c=2}^C\sum_{i=1}^N \sum_{d=1}^{c-1}(\frac{\gamma_{t-d}}{Q/C}\sum_{b\in\mathcal{B}_c}\Delta \alpha_{b,i}^{t-d})^2\\
    &= \frac{(C-1)^2}{\lambda^2 N}\sum_{c=2}^C\sum_{i=1}^N \sum_{d=1}^{c-1}(\gamma_{t-d}\frac{1}{Q/C}\sum_{b\in\mathcal{B}_c}s_{b,i}^t (u_i^{t-1}-\alpha_i^{t-1}))^2\\
    &\leq \frac{(C-1)^2}{\lambda^2 N}\sum_{c=2}^C\sum_{i=1}^N \sum_{d=1}^{c-1}\gamma_{t-d}^2 (u_i^{t-1}-\alpha_i^{t-1})^2\cdot (\frac{1}{Q/C}\sum_{b\in\mathcal{B}_c}s_{b,i})^2\\
    &\leq \frac{4L^2(C-1)^2}{\lambda^2 N}\sum_{c=2}^C\sum_{i=1}^N \sum_{d=1}^{c-1}\gamma_{t-d}^2\\
    &\leq \frac{4L^2(C-1)^2}{\lambda^2 N}\sum_{c=2}^C\sum_{i=1}^N (c-1)\gamma_{t-c+1}^2\\
    &\leq \frac{4L^2(C-1)^4}{\lambda^2}\gamma_{t-C+1}^2.\\
\end{align*}

We substitute this into (\ref{vert_comp4}) to get

\begin{align*}
    \mathbb{E}[A-B|\alpha^{t-1}]
    &\geq H\gamma_t[P(w^{t-1})-D(\alpha^{t-1})] -\frac{H\gamma_t \lambda}{2}\frac{4L^2(C-1)^4}{\lambda^2}\gamma_{t-C+1}^2 -\frac{H}{N}\sum_{i=1}^N [\frac{4L^2\gamma_{t-C+1}^2(C-1)^2}{\lambda}\\&+\frac{1}{2\lambda} (\gamma_t(u_i^{t-1}-\alpha_i^{t-1}))^2]\\
    &\geq H\gamma_t[P(w^{t-1})-D(\alpha^{t-1})] -\frac{2L^2H(C-1)^4}{\lambda}\gamma_{t-C+1}^2 -\frac{H}{N}\sum_{i=1}^N [\frac{4L^2\gamma_{t-C+1}^2(C-1)^2}{\lambda}+\frac{2L^2\gamma_t^2}{\lambda}]\\
    &\geq H\gamma_t[P(w^{t-1})-D(\alpha^{t-1})] -\frac{2L^2H(C-1)^4}{\lambda}\gamma_{t-C+1}^2 -H[\frac{4L^2\gamma_{t-C+1}^2(C-1)^2}{\lambda}+\frac{2L^2\gamma_{t-C+1}^2}{\lambda}]\\
    &\geq H\gamma_t[P(w^{t-1})-D(\alpha^{t-1})] -\gamma_{t-C+1}^2H\underbrace{[\frac{2L^2(C-1)^4}{\lambda} +\frac{4L^2(C-1)^2}{\lambda}+\frac{2L^2}{\lambda}]}_{G}.
\end{align*}

As in the previous proofs this yields
\begin{align*}
    \mathbb{E}[\varepsilon_D^t] &\leq (1-\frac{\gamma_t H}{N}) \mathbb{E}[\varepsilon_D^{t-1}] +\frac{\gamma_{t-C+1}^2 H}{N}G
\end{align*}

and in turn
\begin{align*}
    \mathbb{E}[\varepsilon_D^t] &\leq (1-\frac{\gamma_t H}{N})[(1-\frac{\gamma_{t-1} H}{N}) \mathbb{E}[\varepsilon_D^{t-2}] +\frac{\gamma_{t-C}^2 H}{N}G] +\frac{\gamma_{t-C+1}^2 H}{N}G\\
    &\leq (1-\frac{\gamma_t H}{N})[(1-\frac{\gamma_{t-1} H}{N}) [(1-\frac{\gamma_{t-2} H}{N}) \mathbb{E}[\varepsilon_D^{t-3}] +\frac{\gamma_{t-C-1}^2 H}{N}G] +\frac{\gamma_{t-C}^2 H}{N}G] +\frac{\gamma_{t-C+1}^2 H}{N}G\\
    &\leq \underbrace{\prod_{\tau=C}^t(1-\frac{H\gamma_\tau}{N})\mathbb{E}[\varepsilon_D^{C-1}]}_{B_1} + \underbrace{\frac{GH}{N}\sum_{i=1}^{t-C+1}\gamma_i^2 \prod_{\tau=i}^{t-C}(1-\frac{H\gamma_{\tau+C}}{N})}_{B_2}.
\end{align*}
Note that $\frac{H}{N}\leq 1$ by definition.

First, we bound $B_1$. Using the fact that $\ln(1-x)\leq -x$ for every $0\leq x \leq 1$ and that $\frac{1}{\tau}$ monotonically decreases for $\tau>0$, we have
\begin{align*}
    \ln(\prod_{\tau=C}^t(1-\frac{H}{N\tau})) &= \sum_{\tau=C}^t\ln(1-\frac{H}{\tau N})\\
    &\leq -\frac{H}{N}\sum_{\tau=C}^t \frac{1}{\tau}\\
    &\leq -\frac{H}{N}\int_{C}^{t+1} \frac{1}{\tau'}d\tau'\\
    &= -\frac{H}{N}\ln(\tau ')\bigg|_{C}^{t+1}\\
    &= -\frac{H}{N}\ln(\frac{t+1}{C})\\
    &= \ln((\frac{t+1}{C})^{-\frac{H}{N}}).\\
\end{align*}
Thus we have
\begin{align*}
    B_1 &\leq \mathbb{E}[\varepsilon_D^{C-1}]\frac{C^{H/N}}{(t+1)^{H/N}}.\\
\end{align*}

Next, we bound $B_2$. Using the fact that $\ln(1-x)\leq -x$ for every $0\leq x \leq 1$ and that $\frac{1}{\tau+C}$ monotonically decreases for $\tau>0$ we have
\begin{align*}
    \ln(\prod_{\tau=i}^{t-C}(1-\frac{H\gamma_{\tau+C}}{N}))&= \sum_{\tau=i}^{t-C} \ln(1-\frac{H\gamma_{\tau+C}}{N})\\
    &\leq -\frac{H}{N}\sum_{\tau=i}^{t-C}\frac{1}{\tau+C}\\
    &\leq -\frac{H}{N} \int_{i}^{t-C+1}\frac{1}{\tau'+C}d\tau'\\
    &=-\frac{H}{N} \ln(\tau'+C) \bigg |^{t-C+1}_{i}\\
    &=-\frac{H}{N}\ln(\frac{t+1}{i+C})\\
    &=\ln((\frac{i+C}{t+1})^\frac{H}{N}).
\end{align*}

Therefore, we have $\prod_{\tau=i}^{t-C}(1-\frac{H\gamma_{\tau+C}}{N}) \leq (\frac{i+C}{t+1})^\frac{H}{N}$.

We know that $H/N\leq1$ and thus
\begin{align*}
    B_2 &= \frac{GH}{N}\sum_{i=1}^{t-C+1}\gamma_i^2 \prod_{\tau=i}^{t-C}(1-\frac{H\gamma_{\tau+C}}{N})\\
    &\leq \frac{GH}{N}\sum_{i=1}^{t-C+1}\gamma_i^2 (\frac{i+C}{t+1})^\frac{H}{N}\\
    &\leq \frac{GH}{N}\sum_{i=1}^{t-C+1}\gamma_i^2 (\frac{i+C}{t})^\frac{H}{N}\\
    &= \frac{GH}{N}t^{-H/N}\sum_{i=1}^{t-C+1}\frac{(i+C)^\frac{H}{N}}{i^2}\\
    &\leq \frac{GH}{N}t^{-H/N}\sum_{i=1}^{t-C+1}\frac{i+C}{i^2}\\
    &= \frac{GH}{N}t^{-H/N}\sum_{i=1}^{t-C+1}[\frac{1}{i} + \frac{C}{i^2}]\\
    &\leq \frac{GH}{N}t^{-H/N}\sum_{i=1}^{t-C+1}[\frac{1}{i} + \frac{C}{i}]\\
    &= \frac{GH}{N}t^{-H/N}(C+1)\sum_{i=1}^{t-C+1}\frac{1}{i}\\
    &= \frac{GH}{N}t^{-H/N}(C+1)\text{Har}_{t-C+1},\\
\end{align*}
where $\text{Har}_{t-C+2}$ is the $(t-C+1)$-th harmonic number. Using the well-known bound on the harmonic numbers $\text{Har}_n\leq \ln(n)+1$, we have
\begin{align*}
    B_2 \leq \frac{GH(C+1)(\ln(t-C+1)+1)}{Nt^{H/N}}.\\
\end{align*}

All that is left is to bound $\mathbb{E}[\varepsilon_D^{C-1}]$, which follows a similar recurrence relation as before but with slightly different terms. This results in a finite expression that depends on $\varepsilon_D^0$. Since $\varepsilon_D^0 \leq 1$ (proved in Lemma 20 of \citep{shalev}), it is clear that $\mathbb{E}[\varepsilon_D^{C-1}]$ is bounded by a constant.

Thus we have
\begin{align*}
    \mathbb{E}[\varepsilon_D^{t}] \leq \frac{J_1 + J_2(\ln(t-C+1)+1)}{t^{H/N}}
\end{align*}
where
\begin{align*}
    J_1 &= C^{H/N}\mathbb{E}[\varepsilon_D^{C-1}]\\
    J_2 &= \frac{2HL^2(C+1)}{N\lambda}[(C-1)^4  +2(C-1)^2+1].
\end{align*}

This completes the proof.

\end{proof}

\clearpage

\section{Computational Study}
\label{app_b}
We provide the full details of the experiments here. The details provided are enough to reproduce results, and the full set of experimental results are provided here as well.

\subsection{Implementation}

In order to investigate the performance of HyFDCA, we implemented HyFDCA using the practical LocalDualMethod as described in Algorithm 5 as well as FedAvg and HyFEM using Python 3.10. NumPy was used to handle all matrices and matrix algebra for the MNIST and Covtype datasets, and due to the sparsity of the News20 dataset, we employed SciPy's sparse matrices. 

Kubernetes was employed for container management to enable several concurrent experiments. Each dataset's comparable experiments were run on identical pods on the same node (server) for all three algorithms to ensure fair comparison of results. We note that, experiments terminated based on a preset number of outer iterations ("Equal Outer Iterations") and those terminated based on a preset wall time ("Equal Wall Time") were run on different hardware. Since no direct comparisons were made between these data and all comparable results were run using identical hardware, we emphasize these are fair and comparable experiments:

\begin{itemize}
    \item \textbf{MNIST Equal Outer Iterations:} 12 Intel\textregistered\: Core\texttrademark\: i7-6850K CPU @ 3.60GHz, 6 cores each
    \item \textbf{MNIST Equal Wall Time:} 32 Intel\textregistered\: Xeon\textregistered\: Silver 4208 CPU @ 2.10GHz, 8 cores each
    \item \textbf{News20 Equal Outer Iterations:} 16 Intel\textregistered\: Core\texttrademark\: i7-7820X CPU @ 3.60GHz, 8 cores each 
    \item \textbf{News20 Equal Wall Time:} 48 Intel\textregistered\: Xeon\textregistered\: Silver 4214 CPU @ 2.20GHz, 12 cores each
    \item \textbf{Covtype Equal Outer Iterations:} 16 Intel\textregistered\: Core\texttrademark\: i7-7820X CPU @ 3.60GHz, 8 cores each 
    \item \textbf{Covtype Equal Wall Time:} 16 Intel\textregistered\: Core\texttrademark\: i7-7820X CPU @ 3.60GHz, 8 cores each

\end{itemize}

In FL each client would perform local computations in parallel, but in our simulation the client objects were updated sequentially and the slowest client was used to record the time (to accurately simulate the clients working in parallel).

Finally, homomorphic encryption was not actually conducted as part of the experiment. Our simulation simply found the number of necessary encryptions/operations in each iteration and used this to compute the estimated encryption time penalty based on published benchmarks for homomorphic encryption algorithms. For the first step of the algorithm, there is a decryption penalty only for the dual variables that were updated and are stale on the clients. This variable depends on the iteration. For the SecureInnerProduct, only the sample indices that are going to be chosen to be updated on at least one of the clients need to be encrypted and sent to the server for addition and back for decryption. Since there may be significant overlap in the indices chosen between clients, this is also variable depending on the iteration. The dual variable updates found on each client must also be encrypted before sending to the server. The number of dual variables that then need to be decrypted by each client varies because the overlap in which dual variables are selected for updates. In the experiments, the number of necessary encryption/decryption operations were found in each iteration and used to calculate the encryption penalties to be added to the total wall time.

The total number of outer iterations used for each problem setting was 2,500 outer iterations for MNIST, 10,000 outer iterations for News20, and 30,000 outer iterations for Covtype. For equal wall time experiments, 3 hours (10,800 seconds) of computation as if computed in parallel was completed.

%\begin{table*}[h]
%\caption{Computer Hardware Information} \label{dataset-info-table}
%\begin{center}
%\begin{tabular}{p{0.2\linewidth} | p{0.2\linewidth} p{0.2\linewidth} p{0.2\linewidth}}
%& \textbf{MNIST} & \textbf{News20} & \textbf{Covtype} \\
%\hline \\
%\textbf{CPU} & Intel Core i7-6850K CPU @ 3.60GHz & Intel Xeon Silver 4108 CPU @ 1.80GHz & Intel Core i7-6850K CPU @ 3.60GHz \\
%\textbf{Number of Processors} & 11 & 32 & 12 \\
%\textbf{Maximum Memory Allocated} & 6 GiB & 6 GiB & 10 GiB \\
%\label{hardware-spec-table}
%\end{tabular}
%\end{center}
%\end{table*}

\subsection{Data Partitioning}
Table \ref{dataset-info-table} outlines the key characteristics of each dataset. Sparsity is defined as the percent of zero values divided by the total number of values.

% Not sure if the vertical line is allowed for formatting reasons
\begin{table}[h]
\caption{Dataset Information} \label{dataset-info-table}
\begin{center}
\begin{tabular}{r|lll}
& \textbf{MNIST} & \textbf{News20} & \textbf{Covtype} \\
\hline \\
\textbf{Type} & Image & Text & Multivariate \\
\textbf{Classes} & 10 & 2 & 2 \\
\textbf{Samples} & 70,000 & 19,996 & 581,012 \\
\textbf{Features} & 784 & 1,355,191 & 54 \\
\textbf{Sparsity} & $80.858\%$ & $99.966\%$ & $77.878\%$ \\
\end{tabular}
\end{center}
\end{table}

Due to the differences in the datasets, specifically the meaning of the features, different methodologies were used to partition the data among a number of clients while ensuring that approximately the same number of non-zero data-points were included for each client. We note, however, that data is inherently non-IID in the hybrid FL case because each client stores different sections of the feature space.

Each MNIST sample is a $28$x$28$ pixel image where values were normalized to $[0,1]$. Each sample was split into four equal-sized quadrants, and thus $Q=4$ for all problem settings. A bias feature of value 10 was appended to the fourth quadrant's data; this larger value was chosen to prevent the bias term from being affected as strongly by regularization. Each client was then provided with $\frac{4N}{KQ}$ sample quadrants where $KQ$ is the total number of clients. The first quarter of clients received features from the first quadrant of the images, the second quarter of clients received features from the second quadrant of the images, and so on.

News20 and Covtype used a different assignment procedure as geometrically segmenting features makes no sense for either dataset. First, the samples are evenly divided into $K$ number of sample groups. Within each sample group, Q clients are defined. Examining each sample within a segment separately, each client receives $1/Q$ of the non-zero features. No bias term was employed for either the News20 or Covtype datasets. As such, there were $K\cdot Q$ clients. Similar to MNIST, this process ensures that there is no data overlap between clients and minimizes data imbalances. 

\subsection{Hyperparameter Tuning}

For HyFDCA, the only hyperparameter to tune was the number of inner iterations because no diminishing learning rate is needed for convergence. FedAvg requires tuning of the number of inner iterations on each client and the constants $a$ and $b$ in the learning rate $\gamma_t = \frac{a}{b+\sqrt{t}}$. Inner iterations in FedAvg correspond to the number of times the primal weights are updated in a given iteration. Our implementation used a batch size of one to find stochastic gradients, and made $H$ updates to the primal weights in each iteration. Finally, HyFEM is similar to FedAvg in needing to tune ICC, $a$, $b$ but also requires $\mu$, which balances the two losses.

The number of inner iterations was not directly tuned. Rather, we employed an inner iterations constant, $IIC$, that defined the number of inner iterations as a function of the number of training samples, total number of clients, and $IIC$ as follows:

\begin{equation}
\label{iic_calc}
    H = \ceil[\Bigg]{\frac{IIC\cdot N}{KQ}}.
\end{equation}

We employed a time-bounded random search method in order to collect sufficient data to select hyperparameters \citep{random-search}. The search was run for 5 days (432,000 seconds) per dataset-algorithm combination. Nine client-fraction combinations were tuned independently for each dataset. For MNIST, this consisted of all combinations of $5$, $500$, and $5000$ clients with $0.1$, $0.5$, $0.9$ fraction of clients available. For News20 and Covtype, all combinations using three sets of (sample groups, feature groups) - $(3,3)$, $(5,5)$, $(12,12)$ - were examined using the same three fractions - $0.1$, $0.5$, $0.9$ - leading to a 9 client-fraction combinations. Based on an understanding of reasonable hyperparameters from preliminary testing, the search area was bounded as follows:
\begin{itemize}
    \item HyFDCA IIC: $[\frac{\min{clients}}{samples}, 1.0]$
    \item FedAvg IIC: $[\frac{\min{clients}}{samples}, 5.0]$
    \item FedAvg a: $[10^{-5}, 25.0]$
    \item FedAvg b: $[10^{-5}, 25.0]$.
    \item HyFEM IIC: $[\frac{\min{clients}}{samples}, 5.0]$
    \item HyFEM a: $[10^{-5}, 25.0]$
    \item HyFEM b: $[10^{-5}, 25.0]$.
    \item HyFEM $\mu$: $[10^{-2}, 10^{4}]$.
\end{itemize}

We used (\ref{iic_calc}) to calculate appropriate ranges for $IIC$ for each dataset. From preliminary testing, we believed that smaller values of all hyperparameters were more likely to be chosen as optimal leading us to sample values from a logarithmic distribution. We randomly sampled a value, $x_i$, from the uniform distribution $[log_{10}x_{min}, log_{10}x_{max}]$ where $x_{min}$ and $x_{max}$ are the lower and upper bounds of the given hyperparameter, respectively. The randomly selected hyperparameter is therefore defined as $10^{x_i}$.

% We started an initial random search run of 5 days ($432,000$ seconds); News20 required an additional 2 days ($172,800$ seconds) of computational time to yield sufficient data to select hyperparameters only for the case of FedAvg. A set of hyperparameters was randomly selected as described above, and a run was completed for all 9 client-fraction combinations using those hyperparameters for a predefined number of outer iterations: $250$ for MNIST, $1000$ for News20, and $3000$ for Covtype. The time constrain was only checked at the start of each set of hyper-parameters. In other words, so long as a set new set of hyper-parameters was chosen before the computational time limit was reached, the runs for all 9 client-fraction combinations would be completed regardless of how much time they took.

Hyperparameter tuning in the case of federated learning is more complicated due to the large number of competing metrics that define an algorithm's performance. For example, we may wish to minimize the total number of outer iterations (communication rounds) to reach a satisfactory loss function value but also wish to minimize the total computation time on each client because of computational limits on devices such as smart phones. These two goals are directly conflicting. For this reason, we frame this hyperparameter selection problem with multiple metrics as a multiobjective optimization problem where the optimal solution must be selected from the Pareto-Optimal front. We solve this using Gray Relational Analysis as described in \citet{GRA}. The metrics we use for the hyperparameter selection are as follows:
\begin{enumerate}
    \item Average runtime per iteration with $0.000$ seconds of round-trip latency
    \item Average runtime per iteration with $0.2575$ seconds of round-trip latency
    \item Average runtime per iteration with $0.8000$ seconds of round-trip latency
    \item Average of last 5 loss function values
    \item Maximum validation accuracy
    \item Volatility - standard deviation of differences in consecutive loss function values
    \item Number of iterations to reach $90\%$ progress of minimizing the loss function
\end{enumerate}
Here, $0.000$ seconds of round-trip latency is meant to represent the most ideal scenario, $0.2575$ seconds represents a long-distance server connection (US-Singapore AWS server representation \citep{AWS-latency}), and $0.8000$ represents a GEO satellite connection \citep{GEO-latency}. There are $4.5$ round-trips of information transmissions for HyFDCA per iteration, $1.0$ round-trips for FedAvg per iteration, and $1.0$ round-trips for HyFEM per iteration.

All divergent runs are excluded from GRA. All optimally selected hyperparameters for HyFDCA, FedAvg, and HyFEM are listed in Tables \ref{MNIST-hyperparameters}-\ref{Covtype-hyperparameters}. We note, however, that we are unable to find a viable set of hyperparameters for HyFEM in the News20 dataset despite doubling the search time to 10 days (864,000 seconds). Therefore, we use FedAvg's hyperparameters with $\mu = 1$ as was done in the original HyFEM paper \citep{hyfem}. This demonstrates a key issue with FedAvg and HyFEM that HyFDCA alleviates:  FedAvg does not guarantee convergence and HyFEM does not guarantee convergence to the centralized solution while both are highly sensitive to the choice of hyperparameters. In other words, if hyperparameters are not well tuned, either algorithm simply will not make progress and FedAvg could completely diverge. In addition to being highly sensitive, FedAvg's and HyFEM's hyperparameters are hard to tune with the viable hyperparameter range appearing to be relatively narrow. Though we tune HyFDCA's IIC parameter for fair comparisons, tuning is not required for convergence and only helps increase convergence efficiency.

\begin{table*}[h]
\small
\caption{MNIST optimal hyperparameters employed} 
\label{mnist-hyperparameter-table}
\begin{center}
\begin{tabular}{r|llllllll}
& \textbf{HyFDCA IIC} & \textbf{FedAvg IIC} & \textbf{FedAvg a} & \textbf{FedAvg b} & \textbf{HyFEM IIC} & \textbf{HyFEM a} & \textbf{HyFEM b} & \textbf{HyFEM $\mu$} \\
\hline
\textbf{5, 0.1} & 0.0001876 & 0.04872 & 0.05857 & 0.1124 & 0.00568 & 0.00006007 & 1.015 & 6.296 \\
\textbf{5, 0.5} & 0.001564 & 0.04872 & 0.05857 & 0.1124 & 0.00568 & 0.00006007 & 1.015 & 6.296 \\
\textbf{5, 0.9} & 0.002816 & 0.01718 & 0.02867 & 0.0004823 & 0.00568 & 0.00006007 & 1.015 & 6.296 \\
\textbf{500, 0.1} & 0.03827 & 0.01718 & 0.02867 & 0.0004823 & 0.05594 & 0.0004672 & 0.05344 & 2.797 \\
\textbf{500, 0.5} & 0.002252 & 0.06561 & 0.04185 & 0.00004166 & 0.00568 & 0.00006007 & 1.015 & 6.296 \\
\textbf{500, 0.9} & 0.08337 & 0.04872 & 0.05857 & 0.1124 & 0.00568 & 0.00006007 & 1.015 & 6.296 \\
\textbf{5000, 0.1} & 0.002673 & 1.618 & 0.003995 & 0.8498 & 0.05594 & 0.0004672 & 0.05344 & 2.797 \\
\textbf{5000, 0.5} & 0.0003154 & 1.618 & 0.003995 & 0.8498 & 0.00568 & 0.00006007 & 1.015 & 6.296 \\
\textbf{5000, 0.9} & 0.02195 & 1.618 & 0.003995 & 0.8498 & 0.05594 & 0.0004672 & 0.05344 & 2.797 \\
\label{MNIST-hyperparameters}
\end{tabular}
\end{center}
\end{table*}

\begin{table*}[h]
\small
\caption{News20 optimal hyperparameters employed}
\label{mnist-hyperparameter-table}
\begin{center}
\begin{tabular}{r|llllllll}
& \textbf{HyFDCA IIC} & \textbf{FedAvg IIC} & \textbf{FedAvg a} & \textbf{FedAvg b} & \textbf{HyFEM IIC} & \textbf{HyFEM a} & \textbf{HyFEM b} & \textbf{HyFEM $\mu$} \\
\hline
\textbf{3, 3, 0.1} & 0.01448 & 0.003037 & 0.8753 & 0.4597 & 0.003037 & 0.8753 & 0.4597 & 1.000 \\
\textbf{3, 3, 0.5} & 0.01187 & 0.002873 & 4.891 & 4.496 & 0.002873 & 4.891 & 4.496 & 1.000 \\
\textbf{3, 3, 0.9} & 0.01122 & 0.002873 & 4.891 & 4.496 & 0.002873 & 4.891 & 4.496 & 1.000 \\
\textbf{5, 5, 0.1} & 0.03072 & 0.002873 & 4.891 & 4.496 & 0.002873 & 4.891 & 4.496 & 1.000 \\
\textbf{5, 5, 0.5} & 0.0286 & 0.003037 & 0.8753 & 0.4597 & 0.003037 & 0.8753 & 0.4597 & 1.000 \\
\textbf{5, 5, 0.9} & 0.03072 & 0.002873 & 4.891 & 4.496 & 0.002873 & 4.891 & 4.496 & 1.000 \\
\textbf{12, 12, 0.1} & 0.08568 & 4.542 & 3.792 & 0.1815 & 0.03000\tablefootnote{In FedAvg, we found this IIC value to be 4.542 but further experiments in HyFEM showed that said set of hyperparameters resulted in divergent behaviour. This further emphasizes the difficulties of selecting hyperparameters for HyFEM in practical use cases. We changed this IIC value to result in convergent behaviour for fair comparisons between algorithms. All data shown was collected using this value.} & 3.792 & 0.1815 & 1.000 \\
\textbf{12, 12, 0.5} & 0.08504 & 4.542 & 3.792 & 0.1815 & 4.542 & 3.792 & 0.1815 & 1.000 \\
\textbf{12, 12, 0.9} & 0.07564 & 4.542 & 3.792 & 0.1815 & 4.542 & 3.792 & 0.1815 & 1.000 \\
\label{News20-hyperparameters}
\end{tabular}
\end{center}
\end{table*}

\begin{table*}[h]
\small
\caption{Covtype optimal hyperparameters employed}
\label{mnist-hyperparameter-table}
\begin{center}
\begin{tabular}{r|llllllll}
& \textbf{HyFDCA IIC} & \textbf{FedAvg IIC} & \textbf{FedAvg a} & \textbf{FedAvg b} & \textbf{HyFEM IIC} & \textbf{HyFEM a} & \textbf{HyFEM b} & \textbf{HyFEM $\mu$} \\
\hline
\textbf{3, 3, 0.1} & 0.0001326 & 0.02291 & 0.4475 & 0.0003501 & 0.03747 & 0.6675 & 0.01209 & 0.01458 \\
\textbf{3, 3, 0.5} & 0.0002643 & 0.02291 & 0.4475 & 0.0003501 & 0.03747 & 0.6675 & 0.01209 & 0.01458 \\
\textbf{3, 3, 0.9} & 0.00005533 & 0.02291 & 0.4475 & 0.0003501 & 0.02038 & 0.4948 & 0.08829 & 0.01155 \\
\textbf{5, 5, 0.1} & 0.00007376 & 0.03647 & 0.3247 & 4.514 & 0.413 & 1.982 & 1.058 & 0.02459 \\
\textbf{5, 5, 0.5} & 0.0008319 & 0.02291 & 0.4475 & 0.0003501 & 0.00004909 & 1.913 & 4.679 & 0.6115 \\
\textbf{5, 5, 0.9} & 0.0001326 & 0.03647 & 0.3247 & 4.514 & 0.05226 & 2.41 & 0.0142 & 0.06684 \\
\textbf{12, 12, 0.1} & 0.0008319 & 0.000559 & 0.05344 & 0.004065 & 0.000102 & 0.1264 & 0.0003874 & 8.775 \\
\textbf{12, 12, 0.5} & 0.0007434 & 0.000559 & 0.05344 & 0.004065 & 0.03747 & 0.6675 & 0.01209 & 0.01458 \\
\textbf{12, 12, 0.9} & 0.0007434 & 0.2494 & 11.93 & 0.05131 & 0.05226 & 2.41 & 0.0142 & 0.06684 \\

\label{Covtype-hyperparameters}
\end{tabular}
\end{center}
\end{table*}

\subsection{Analysis Methods}
Table \ref{centralized-info-table} shows the number of centralized iterations to find the optimal centralized solutions along with the minimum loss value for these centralized runs.

\begin{table}[h]
\caption{Centralized Training Information}
\begin{center}
\begin{tabular}{r|lll}
& \textbf{MNIST} & \textbf{News20} & \textbf{Covtype} \\
\hline \\
\textbf{Iterations} & 30,000 & 40,000 & 1,000,000 \\
\textbf{Min Loss} & 0.07242 & 0.008017 & 0.5286 \\
\label{centralized-info-table}
\end{tabular}
\end{center}
\end{table}

Due to a combination of factors including the large number of outer iterations,  characteristics of the datasets, and the nature of the algorithms employed, there was significant volatility in the loss function and validation accuracy values making plots hard to read. We applied a moving average in order to smooth the loss function and validation accuracy values solely for the sake of readability in plotting. The moving average had a window, $\omega$, of 50 outer iteration for MNIST, 150 for News20, and 300 for Covtype for equal outer iteration experiments as defined above. Since the number of outer iterations for equal wall time experiments ranged between 44 and 11,605,640, we define $\omega$ to be the rounded integer value of 1\% of the number total number of outer iterations where $\omega < 2$ remains not smoothed.

All three algorithms were run for either the same number of outer iterations or same amount of parallel-equivalent wall time without a stopping criterion.

Additionally, we employed a relative time measure and percent of outer iterations completed for clearer and normalized plotting. These were performed separately for each dataset, and it preserves the relative differences between the three algorithms while allowing the results from all three datasets to be plotted together. Such values are defined as follows:

\begin{equation}
    T_R = \frac{T}{\max\{{T_1, T_{2}, T_{3}}\}}
\end{equation}

where $T_R$ is the relative time value, $T_1$ is the total wall time for HyFDCA, $T_2$ is the total wall time for FedAvg, $T_3$ is the total wall time for HyFEM, and T is the original time quantity for either HyFDCA or FedAvg or HyFEM. Additionally, we have

\begin{equation}
    t_R = \frac{t}{t_{max} - \omega}
\end{equation}

where $t_R$ is the percent of outer iterations completed, $\omega$ is the width of the moving average window, $t_{max}$ is the maximum number of outer iterations, and $t$ is the number of outer iterations. 

%\textbf{These equations need some work. I need to describe:}
%slowest_time = max(max(dual_time_smooth), max(primal_time_smooth))
%dual_time_smooth_relative = dual_time_smooth / slowest_time
%where time_smooth has $y$ fewer elements than the raw vector
%Similarly for MNIST:
%iterations_adj = dual_iterations[:-(smooth_num - 1)] / (2500 - smooth_num)

\subsection{Full Results}

Due to the large number of problem settings investigated and the various metrics of interest, only selected plots were included in the main paper. Additional data are included herein to ensure completeness and transparency in reporting. 

Complementing Figure 3 in the main body, Supplementary Materials Figure \ref{equal_time_fig} compares performance with respect to time including communication latency and homomorphic encryption time. Latency of 0.2575s/RTC is employed as a representative moderate scenario. Here, in 33 of 36 comparisons made in Figure \ref{equal_time_fig}, HyFDCA outperforms FedAvg and HyFEM. The other three scenarios are FedAvg and HyFEM having marginally higher validation accuracy for the case of a low number of clients and high fraction of clients participating for the Covtype dataset as well as FedAvg also having a marginally higher validation accuracy for the case of a medium number of clients and medium fraction of clients participating for the MNIST dataset. In these three scenarios we emphasize that the differences in validation accuracy is exceedingly small. In these three scenarios, we note that HyFDCA may not have fully converged and, if run for a greater wall time, HyFDCA would result in better validation accuracy here as well. This is supported by Figure 3 in the main body where HyFDCA eventually has better validation accuracy during later outer iterations. In all other cases, HyFDCA significantly outperforms both FedAvg and HyFEM in terms of loss function value and validation accuracy, further demonstrating its strengths and utility as a hybrid FL algorithm even accounting for the additional encryption and latency times which do not apply to its counterparts.

\begin{figure*}[h!]
\begin{center}
\includegraphics[width=\textwidth]{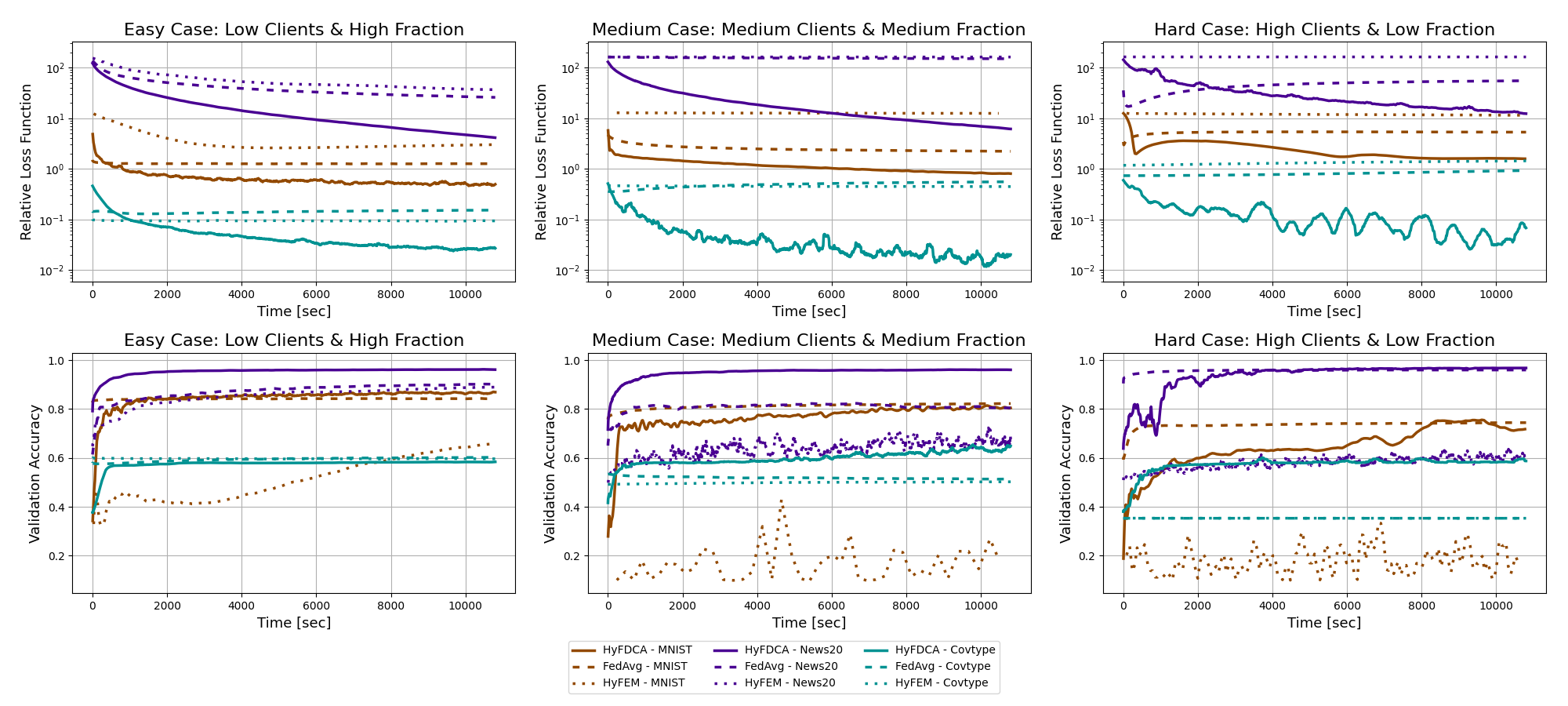}
\caption{Comparison of HyFDCA, FedAvg, and HyFEM over constant parallel-equivalent run-time in varying client-fraction settings.}
\label{equal_time_fig}
\end{center}
\end{figure*}

In addition, we provide complete results for HyFDCA for all 9 client-fraction settings examined to show the effect of different problem settings on the performance of across datasets. Namely, Figure \ref{figure2-mnist} shows both the relative objection function value and validation accuracy for MNIST, Figure \ref{figure2-news20} for News20, and Figure \ref{figure2-covtype} for Covtype.

\begin{figure*}[h!]
\begin{center}
\includegraphics[width=\textwidth]{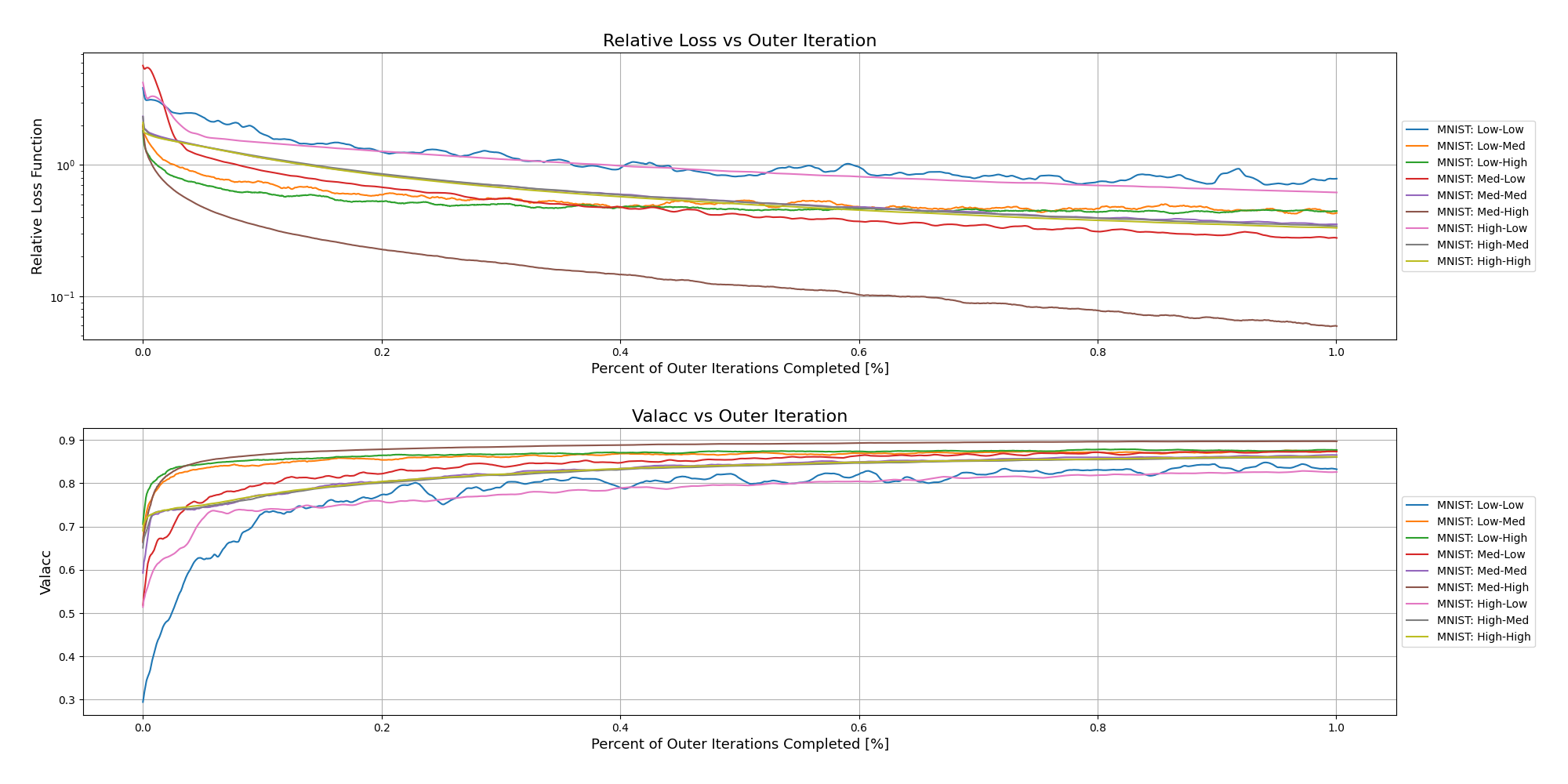}
\caption{Effect of number of clients and fraction of participating clients on HyFDCA performance on MNIST.}
\label{figure2-mnist}
\end{center}
\end{figure*}

\begin{figure*}[h!]
\begin{center}
\includegraphics[width=\textwidth]{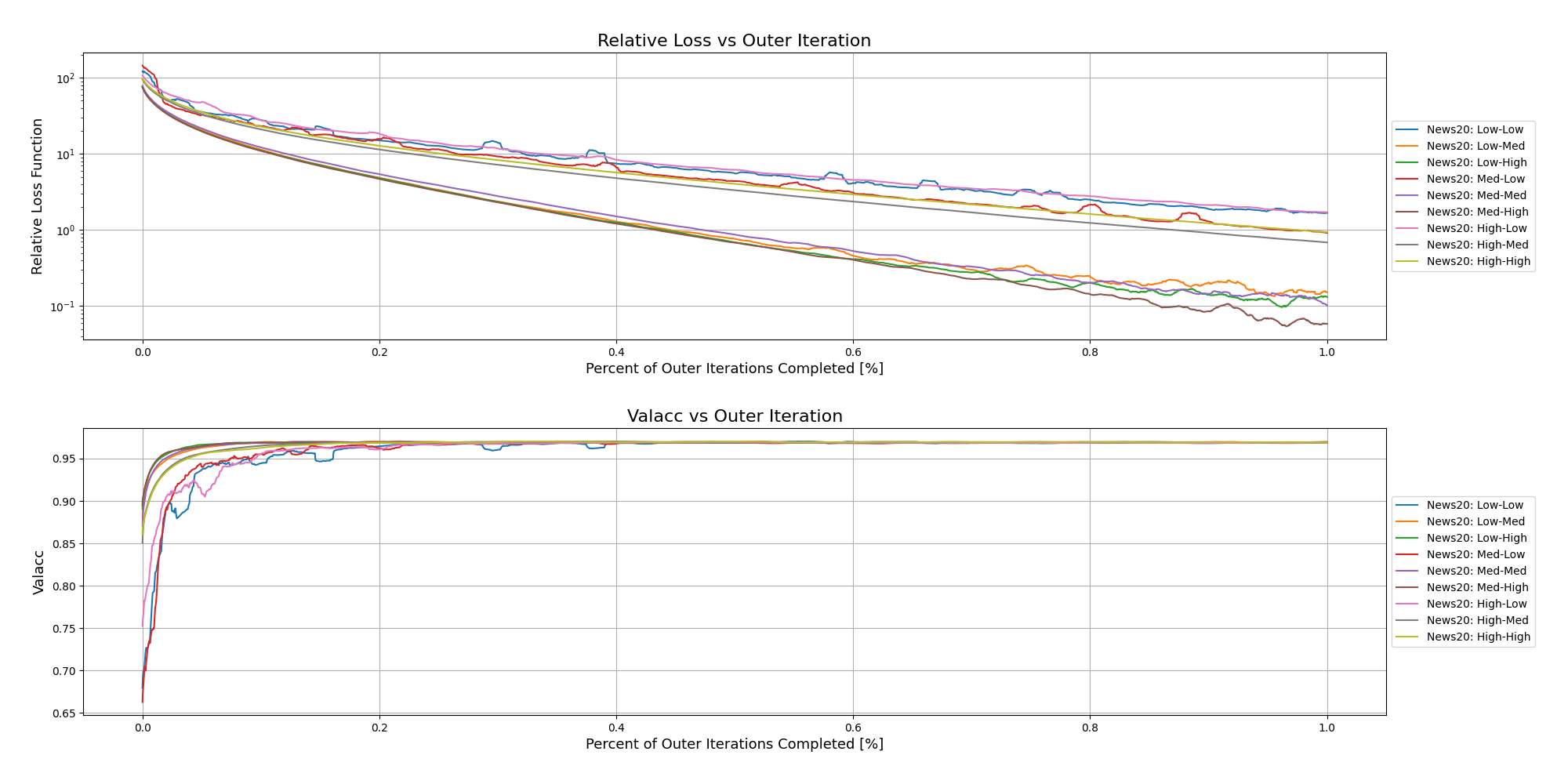}
\caption{Effect of number of clients and fraction of participating clients on HyFDCA performance on News20.}
\label{figure2-news20}
\end{center}
\end{figure*}

\begin{figure*}[h!]
\begin{center}
\includegraphics[width=\textwidth]{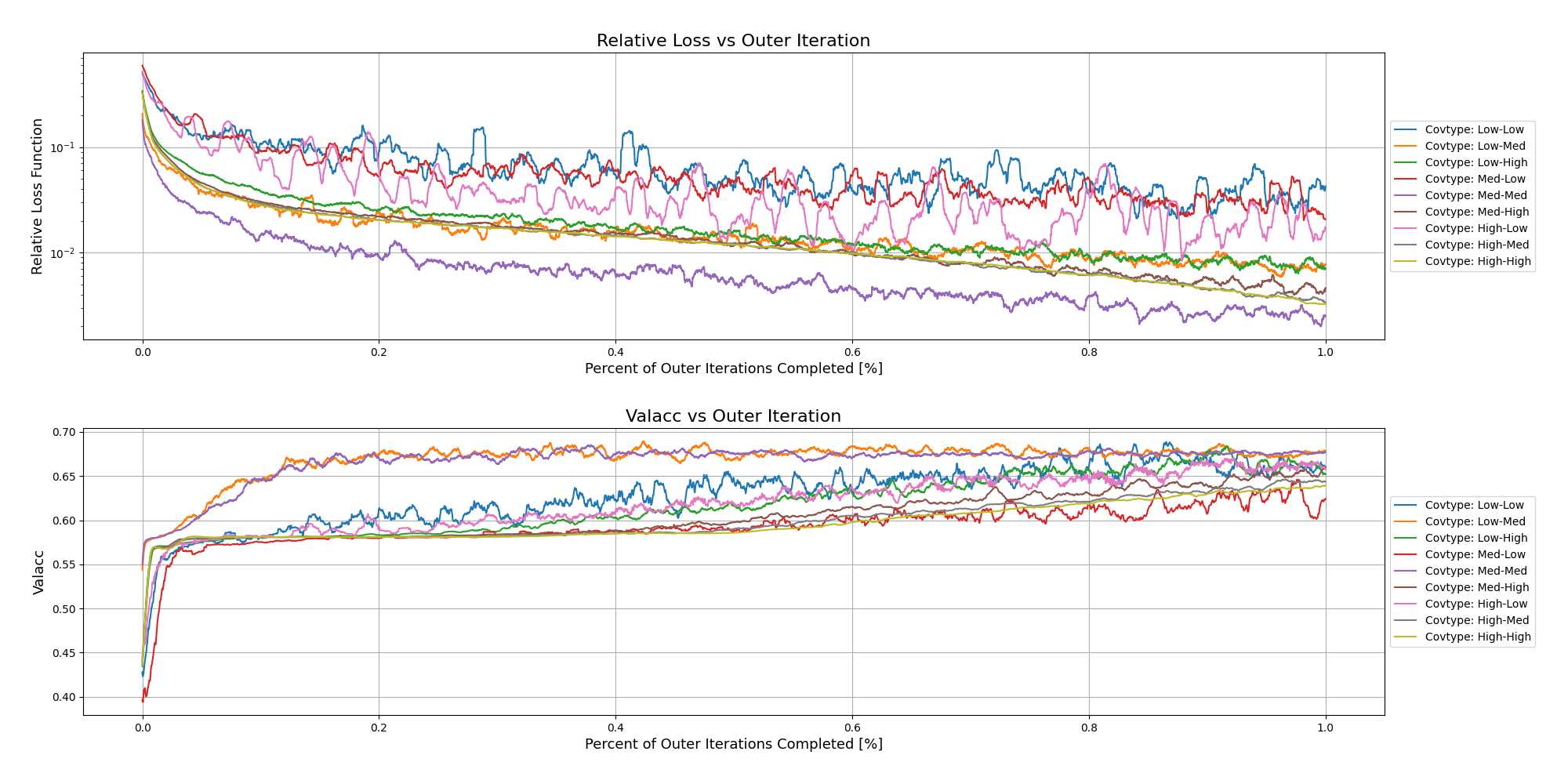}
\caption{Effect of number of clients and fraction of participating clients on HyFDCA performance on Covtype.}
\label{figure2-covtype}
\end{center}
\end{figure*}

Figure 4 of the main paper displays computational and encryption time data only for three client-fraction settings. The full data are represented in Tables \ref{mnist-times-table}, \ref{news20-times-table}, and \ref{covtype-times-table} for MNIST, News20 and Covtype, respectively.

\begin{table}[!htb]
\caption{Computational and Encryption Times for MNIST (seconds)}
\begin{center}
\begin{tabular}{r|SSSS}
 & \textbf{HyFDCA Computational} & \textbf{HyFDCA Encryption} & \textbf{FedAvg Computational} & \textbf{HyFEM Computational} \\
\hline
\textbf{5, 0.1} & 0.1736 & 0.1107 & 0.2268 & 1.210 \\
\textbf{5, 0.5} & 0.2472 & 3.164 & 0.2544 & 1.178 \\
\textbf{5, 0.9} & 0.2975 & 11.32 & 0.1543 & 1.177 \\
\textbf{500, 0.1} & 0.3560 & 7.827 & 0.1027 & 3.140 \\
\textbf{500, 0.5} & 1.207 & 13.67 & 0.1922 & 11.82 \\
\textbf{500, 0.9} & 1.827 & 431.0 & 0.2064 & 20.44 \\
\textbf{5000, 0.1} & 1.900 & 61.69 & 0.5775 & 22.46 \\
\textbf{5000, 0.5} & 7.604 & 1136. & 0.6021 & 108.1 \\
\textbf{5000, 0.9} & 13.62 & 3903. & 0.5455 & 194.8 \\

\label{mnist-times-table}
\end{tabular}
\end{center}
\end{table}

\begin{table}[h]
\caption{Computational and Encryption Times for News20 (seconds)}
\begin{center}
\begin{tabular}{r|SSSS}
& \textbf{HyFDCA Computational} & \textbf{HyFDCA Encryption} & \textbf{FedAvg Computational} & \textbf{HyFEM Computational} \\
\hline
\textbf{3, 3, 0.1} & 0.1402 & 0.3916 & 1.517 & 0.8817 \\
\textbf{3, 3, 0.5} & 0.1879 & 2.180 & 1.528 & 0.9693 \\
\textbf{3, 3, 0.9} & 0.2218 & 3.444 & 1.545 & 1.528 \\
\textbf{5, 5, 0.1} & 0.1793 & 1.095 & 1.452 & 1.598 \\
\textbf{5, 5, 0.5} & 0.2291 & 4.360 & 1.483 & 1.491 \\
\textbf{5, 5, 0.9} & 0.2749 & 7.671 & 1.485 & 2.003 \\
\textbf{12, 12, 0.1} & 0.2670 & 2.360 & 4.194 & 4.316 \\
\textbf{12, 12, 0.5} & 0.3625 & 10.81 & n/a & n/a \\
\textbf{12, 12, 0.9} & 0.4683 & 17.99 & n/a & n/a \\
\label{news20-times-table}
\end{tabular}
\end{center}
\end{table}

\begin{table}[h]
\caption{Computational and Encryption Times for Covtype (seconds)}
\begin{center}
\begin{tabular}{r|SSSS}
& \textbf{HyFDCA Computational} & \textbf{HyFDCA Encryption} & \textbf{FedAvg Computational} & \textbf{HyFEM Computational} \\
\hline
\textbf{3, 3, 0.1} & 0.04036 & 0.07758 & 0.04077 & 0.06974 \\
\textbf{3, 3, 0.5} & 0.0836 & 1.149 & 0.05855 & 0.07176 \\
\textbf{3, 3, 0.9} & 0.1374 & 0.4694 & 0.05638 & 0.04288 \\
\textbf{5, 5, 0.1} & 0.04416 & 0.09462 & 0.02457 & 0.1379 \\
\textbf{5, 5, 0.5} & 0.1113 & 2.920 & 0.02655 & 0.0008275 \\
\textbf{5, 5, 0.9} & 0.1936 & 1.001 & 0.03297 & 0.04010 \\
\textbf{12, 12, 0.1} & 0.1144 & 0.5975 & 0.001403 & 0.0008799 \\
\textbf{12, 12, 0.5} & 0.3472 & 2.771 & 0.002198 & 0.009640 \\
\textbf{12, 12, 0.9} & 0.5341 & 5.453 & 0.03100 & 0.007782 \\

\label{covtype-times-table}
\end{tabular}
\end{center}
\end{table}

\end{document}